
\typeout{IJCAI--22 Instructions for Authors}


\documentclass{article}
\pdfpagewidth=8.5in
\pdfpageheight=11in
\usepackage{ijcai22}

\usepackage{times}
\usepackage{soul}
\usepackage{url}
\usepackage[hidelinks]{hyperref}
\usepackage[utf8]{inputenc}
\usepackage[small]{caption}
\usepackage{graphicx}
\usepackage{amsmath}\allowdisplaybreaks
\usepackage{amsthm}
\usepackage{booktabs}
\usepackage{algorithm}
\usepackage{algorithmic}
\usepackage{subfigure}
\urlstyle{same}

\usepackage{bm}
\usepackage{mathtools}
\usepackage{amssymb}
\mathtoolsset{showonlyrefs,showmanualtags}



\newtheorem{theorem}{Theorem}
\newtheorem{definition}{Definition}
\newtheorem{lemma}{Lemma}

\newcommand{\bmx}{\bm{x}}
\newcommand{\bmy}{\bm{y}}
\newcommand{\opgamma}{\mathop{\gamma}\nolimits}
\newcommand\expect[1]{\mathbb{E}[#1]}
\newcommand\citeny[1]{\citeauthor{#1}~\shortcite{#1}}
\newcommand{\opt}{\mathrm{OPT}}
\newcommand{\bmth}{\bm{\theta}}
\newcommand{\prob}{\mathrm{P}}




\pdfinfo{
/TemplateVersion (IJCAI.2022.0)
}

\title{Robust Subset Selection by Greedy and Evolutionary Pareto Optimization}

\author{
    Anonymous
}

\author{
	Chao Bian$\!$\and
	Yawen Zhou
	\And
	Chao Qian\thanks{This work was supported by the NSFC (62022039) and the	CAAI-Huawei MindSpore Open Fund. Chao Qian is the corresponding author. The conference version of this paper has appeared at IJCAI’22. }
	\affiliations
	State Key Laboratory for Novel Software Technology, Nanjing University, Nanjing 210023, China
	\emails
    bianc@lamda.nju.edu.cn, 
	zyw8769@gmail.com,
	qianc@lamda.nju.edu.cn
}

\begin{document}

\maketitle

\begin{abstract}
Subset selection, which aims to select a subset from a ground set to maximize some objective function, arises in various applications such as influence maximization and sensor placement. In real-world scenarios, however, one often needs to find a subset which is robust against (i.e., is good over) a number of possible objective functions due to uncertainty, resulting in the problem of robust subset selection. This paper considers robust subset selection with monotone objective functions, relaxing the submodular property required by previous studies. We first show that the greedy algorithm can obtain an approximation ratio of $1-e^{-\beta\opgamma}$, where $\beta$ and $\opgamma$ are the correlation and submodularity ratios of the objective functions, respectively; and then propose EPORSS, an evolutionary Pareto optimization algorithm that can utilize more time to find better subsets. We prove that EPORSS can also be theoretically grounded, achieving a similar approximation guarantee to the greedy algorithm. In addition, we derive the lower bound of $\beta$ for the application of robust influence maximization, and further conduct experiments to validate the performance of the greedy algorithm and EPORSS. 
\end{abstract}

\section{Introduction}

The subset selection problem is to select a subset of size at most $k$ from a total set of $n$ items for optimizing some objective function $f$, which arises in many applications, such as influence maximization~\cite{kempe2003maximizing}, sensor placement~\cite{krause2008near}, document summarization~\cite{lin2011class}, and unsupervised feature selection~\cite{feng-aaai19}, etc.
This problem is NP-hard in general, 
and much effort has been devoted to the design of polynomial-time approximation algorithms.

The greedy algorithm, which iteratively selects one item with the largest marginal gain on $f$, is a very popular algorithm for solving the subset selection problem.
For a monotone submodular objective function $f$, it can achieve the optimal polynomial-time approximation guarantee of $1-1/e$~\cite{nemhauser1978best}.
Due to its high-efficiency and theoretically-grounded performance, it is most favored and has been studied extensively~\cite{minoux1978accelerated,mirzasoleiman-aaai15-lazier,hassidim2017greedy,qian2018approximation,friedrich2019greedy}.


In real-world applications of subset selection, however, we may want to find a subset which 
is robust against a number of possible objective functions, i.e., the subset is good on the worst of these objective functions.
For example, in influence maximization, the influence of a set of social network users has significant uncertainty due to different models and parameters, and the goal is to optimize a set of influence functions simultaneously, in which one function is assured to describe the influence process exactly (but which one is not told)~\cite{he-kdd16-robust-inf}. 
In sensor placement, sensors are often used to monitor various phenomena such as temperature and humidity at the same time, whose observations are modeled by different functions, and the goal is to find a subset of candidate locations to place sensors such that the sensors can perform well on all the objective functions~\cite{anari-arxiv18-off-online}.

To address the above more complicated real-world scenario of subset selection,
\citeny{krause-jmlr08-rsos} considered the robust subset selection problem
\begin{align}\label{eq:robust-subsetsel-intro}
	\mathop{\arg\max}\nolimits_{X\subseteq V} \min\nolimits_{1\le i\le m}f_i(X) \quad \text{s.t.}\quad |X|\leq k,
\end{align}
where $f_1,f_2,\ldots,f_m$ are monotone submodular objective functions and $|\cdot|$ denotes the size of a set.
They proved that there cannot exist any polynomial time approximation algorithm unless P=NP, and proposed an algorithm SATURATE, which can find a subset matching the optimal worst-case objective value but with size larger than $k$. 
\citeny{anari-arxiv18-off-online} extended the cardinality constraint $|X| \leq k$ in Eq.~\eqref{eq:robust-subsetsel-intro} to a matroid constraint, and provided a bi-criteria algorithm which can find $O(\log (m/\epsilon))$ feasible subsets whose union achieves an approximation ratio of $1-\epsilon$, where $\epsilon \in (0,1)$.
\citeny{udwani-nips18-multi}   designed a fast and practical algorithm for the case of $m=o(k)$.
\citeny{he-kdd16-robust-inf} modified SATURATE for the application of robust influence maximization, and showed that a $(1-1/e)$-approximation ratio can be achieved when enough extra seed users may be selected. 
\citeny{iyer2019unify} considered both robust submodular minimization and maximization problems, and proposed a general framework to transform the algorithm for one problem to the other.
Recently, \citeny{houtac21} introduced the notion of correlation ratio and proposed a modified greedy algorithm which can achieve an approximation guarantee with respect to the correlation ratio. 

Note that all the above-mentioned previous studies considered monotone submodular objective functions. However, the objective functions of some applications such as influence maximization under general cascade model~\cite{kempe2003maximizing} and sparse regression~\cite{das2011submodular} can be non-submodular. 
This paper thus considers the problem of robust subset selection with monotone objective functions, that is, $f_1,f_2,\ldots,f_m$ are monotone but not necessarily submodular. 
First, we show that the classic greedy algorithm which iteratively adds one item maximizing the gain on the worst-case objective function can achieve an approximation ratio of $1-e^{-\beta\opgamma}$, where $\beta$ and $\opgamma$ are the correlation ratio (slightly different from that proposed in~\cite{houtac21}) and the submodularity ratio of the objective functions, respectively. 

Though the greedy algorithm is efficient and theoretically grounded, its performance may be limited in practice due to the greedy nature. Next, we propose an evolutionary Pareto optimization algorithm, EPORSS, which is an anytime algorithm that can use more time to find better subsets. EPORSS first reformulates the original robust subset selection problem as a bi-objective optimization problem that maximizes $\min_{1\le i\le m}f_i$ and minimizes the subset size simultaneously, then employs a multi-objective evolutionary algorithm to solve it, and finally selects the best feasible solution (i.e., subset) from the generated population. We prove that EPORSS can achieve an  approximation ratio similar to that of the greedy algorithm.

The submodularity ratio $\gamma$ has been analyzed before (e.g., \cite{bian2017guarantees,elenberg2016restricted,qian2019maximizing}), and here we prove a lower bound of the correlation ratio $\beta$ for the application of robust influence maximization, to show the applicability of the derived approximation guarantees of the greedy algorithm and EPORSS. 
Experimental results on robust influence maximization show that the greedy algorithm can achieve competitive performance to the well-known algorithm SATURATE, but requires less time; EPORSS using more time can achieve superior performance over the greedy algorithm and SATURATE. 
Therefore, these two algorithms can be used under the limited and abundant computation budgets, respectively, helping solve diverse practical problems of robust subset selection better.

\section{Preliminaries} 

Given a finite nonempty set $V=\{v_1,v_2,\ldots,v_n\}$, we study the functions $f:2^V \rightarrow \mathbb{R}$ defined on subsets of $V$. A set function $f$ is monotone if for any $X \subseteq Y$, $f(X) \leq f(Y)$. In this paper, we consider monotone functions and assume that they are normalized, i.e., $f(\emptyset)=0$. A set function $f$ is submodular~\cite{nemhauser1978analysis} if
for any $X \subseteq Y \subseteq V$, 
\begin{equation}
		f(Y)-f(X) \leq \sum\nolimits_{v \in Y \setminus X} \big(f(X \cup\{ v\})-f(X)\big). 
\end{equation}
The submodularity ratio in Definition~\ref{def:subratio} characterizes how close a set function $f$ is to submodularity. For a monotone function $f$, it holds that $0\le \opgamma_{X,b}(f) \le 1$, and $f$ is submodular iff $\opgamma_{X,b}(f)\! =\! 1$ for any $X$ and $b$. For some concrete non-submodular applications, lower bounds on $\opgamma_{X,b}(f)$ have been derived~\cite{bian2017guarantees,elenberg2016restricted,qian2019maximizing}. When $f$ is clear, we will omit $f$ and use $\opgamma_{X,b}$ for short.
\begin{definition}[Submodularity Ratio~\cite{das2011submodular}]\label{def:subratio}
	Let $f$ be a non-negative set function. The submodularity ratio of $f$ with respect to a set $X$ and a parameter $b\geq 1$ is
	$$
	\opgamma_{X,b}(f)\!=\!\min_{L \subseteq X, S: |S|\leq b, S \cap L =\emptyset} \frac{\sum_{v \in S} \big(f(L \cup \{v\})\!-\!f(L)\big)}{f(L \cup S)-f(L)}.
	$$
\end{definition}
The subset selection problem as presented in Definition~\ref{def:subset} is to select a subset $X$ of $V$ such that a given monotone objective function $f$ is maximized with the constraint $|X|\leq k$, where $|\cdot|$ denotes the size of a set. 
\begin{definition}[Subset Selection]\label{def:subset} Given a set of  items $V=\{v_1,v_2,\ldots,v_n\}$, a monotone objective function $f$, and a budget $k$, to find\vspace{-0.3em}
	\begin{equation}\label{eq:subsetsel}
		\mathop{\arg\max}\nolimits_{X\subseteq V} f(X) \quad \text{s.t.}\quad |X|\leq k.
		\vspace{-0.3em}
	\end{equation}
\end{definition}
It has many applications such as influence maximization~\cite{kempe2003maximizing} and sensor placement~\cite{krause2008near}. Influence maximization is to select a subset of users from a social network to maximize its influence spread, which often appears in viral marketing. Sensor placement is to select a few places to install  sensors such that the uncertainty is mostly reduced, which often appears in spatial monitoring of certain phenomena.

The original subset selection problem in Definition~\ref{def:subset} considers optimizing only one objective function. However, in real-world scenarios, one may need to find a subset which performs well on a number of possible objective functions. 
For example, in viral marketing, the company may want to identify a set of users to promote several products (e.g., shirts and sweaters) simultaneously, while the influences of users on different products can be different~\cite{kalimeris-imcl19-robust-inf}; even for one product, perturbations or uncertainty regarding the parameters of diffusion models will result in different objective functions. 
In spatial monitoring of certain phenomena, sensors are often used to measure various parameters such as temperature and humidity at the same time, giving rise to different objective functions~\cite{anari-arxiv18-off-online}; besides, the data obtained in real world is often noisy, which will lead to perturbations on the objective functions. 
In both above-mentioned applications, we need to 
find a subset that is robust against all the possible objective functions,
i.e., optimize the worst of these objective functions.
More evidences can be seen on the applications of robust column subset selection~\cite{zhu-aaai15-column} and robust sparse regression~\cite{thompson2022}.
In this paper, we focus on the general robust subset selection problem, presented as follows.
\begin{definition}[Robust Subset Selection]\label{def:robust-subsetsel} Given a set of items $V=\{v_1,v_2,\ldots,v_n\}$,  $m$ monotone objective functions $f_1,f_2,\ldots,f_m$, and a budget $k$, to find\vspace{-0.3em}
	\begin{equation}\label{eq:robust-subsetsel}
		\mathop{\arg\max}\nolimits_{X\subseteq V} F(X) \quad \text{s.t.}\quad |X|\leq k,
		\vspace{-0.3em}
	\end{equation}
	where $F(X)=\min_{1\le i\le m}f_i(X)$ denotes the worst-case objective value of $X$. 
\end{definition}
This problem is very hard even when the objective functions $f_1,f_2,\ldots,f_m$ satisfy the submodular property, because it has been proved that polynomial-time algorithms cannot achieve any constant approximation ratio. By introducing a quantity $\beta^*$, which reflects the correlation of the objective functions,  \citeny{houtac21} proposed the modified greedy algorithm with an approximation ratio of $1-e^{-\beta^*}$.
If allowing the subset size larger than $k$,  \citeny{krause-jmlr08-rsos} showed that the SATURATE algorithm can achieve the optimal $F$ value.

Note that the correlation of the objective functions relies on the behavior of the modified greedy algorithm~\cite{houtac21}. Here we generalize this notion in Definition 4, which will be used in our theoretical analysis.
Intuitively, the correlation ratio reflects the similarity of the objective functions to be optimized. A large correlation ratio implies that all the objective functions can be improved substantially by adding a common item into the subset $X$. For monotone objective functions, $0\le \beta_X\le 1$, and $\beta_X=1$ means that there exists one common item, adding which makes all the functions achieve the largest improvement.
\begin{definition}[Correlation Ratio]\label{def:correlation-ratio}
	The correlation ratio of functions $f_1,f_2,\ldots,f_m$ with respect to a set $X$ is 
	\begin{equation}\label{eq:correlation-X}
		\beta_{X}=\max_{v\in V\backslash X}\min_{1\le i\le m} \frac{f_i(X\cup \{v\})-f_i(X)}{f_i(X\cup \{v^i\})-f_i(X)},
	\end{equation}
	where $v^i\in \arg\max_{v\in V\backslash X} f_i(X\cup \{v\})$.
\end{definition}

\section{The Greedy Algorithm}

To solve the robust subset selection problem, a nature way is to apply the greedy process. 
As shown in Algorithm~\ref{alg:Greedy}, the greedy algorithm iteratively adds one item with the largest improvement on $F$ until $k$ items are selected.
Let $\opt=\max_{X\subseteq V,|X|\le k}F(X)$ denote the optimal function value of the problem in Definition~\ref{def:robust-subsetsel}, and $X_j$ ($0\le j\le k$) denote the subset generated in the $j$-th iteration (note that $X_0=\emptyset$) of Algorithm~\ref{alg:Greedy}.
We prove in Theorem~\ref{thm:greedy} that the greedy algorithm can achieve an approximation ratio of $1-e^{-\beta\opgamma}$, where $\beta=\min_{0\le j\le k-1}\beta_{X_j}$ denotes the minimum correlation ratio with respect to the first $k$ subsets generated by the greedy algorithm, and $\opgamma=\min_{1\le i\le m} \opgamma_{X_{k-1},k}(f_i)$ denotes the minimum submodularity ratio of all the objective functions with respect to $X_{k-1}$ and $k$.
\begin{theorem}\label{thm:greedy}
	For robust subset selection in Definition~\ref{def:robust-subsetsel},  the greedy algorithm finds a subset $X\subseteq V$ with $|X|=k$ and
	\begin{equation}\label{eq:greedy-approx}
		F(X) \geq (1-e^{-\beta\opgamma}) \cdot \opt.
	\end{equation}
\end{theorem}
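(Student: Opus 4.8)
The plan is to follow the classical greedy analysis for monotone (sub)modular maximization, tracking how fast the residual $\opt-F(X_j)$ shrinks across iterations, and to show it contracts by a factor of at least $1-\beta\opgamma/k$ per step. The ingredient special to the robust setting is that the item the greedy adds must be tied to \emph{all} $m$ objectives simultaneously; the correlation ratio $\beta_{X_j}$ is precisely what makes this possible, while the submodularity ratio $\opgamma$ absorbs the non-submodularity of each individual $f_i$. So the heart of the argument is a single per-iteration inequality comparing the greedy gain on $F$ with $\opt-F(X_j)$, obtained by chaining a ``$\beta$-step'' and a ``$\opgamma$-step''.

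Fix an iteration index $j\in\{0,1,\dots,k-1\}$ with current subset $X_j$, and let $X^*$ be an optimal solution, so $F(X^*)=\opt$ and $|X^*|\le k$. Two cases are immediate: if $X^*\subseteq X_j$, then monotonicity of $F$ gives $F(X_k)\ge F(X_j)\ge F(X^*)=\opt$ and we are done; and if $\opgamma=0$ the claimed bound is vacuous. So assume $\opgamma>0$ and $X^*\setminus X_j\neq\emptyset$. Write $\delta_i(v)=f_i(X_j\cup\{v\})-f_i(X_j)$ for $v\in V\setminus X_j$, and let $v^*\in V\setminus X_j$ attain the maximum in Definition~\ref{def:correlation-ratio} for the set $X_j$; since the maximizer $v^i$ there also maximizes $\delta_i(\cdot)$, we get $\delta_i(v^*)\ge\beta_{X_j}\cdot\max_{v\in V\setminus X_j}\delta_i(v)$ for every $i$. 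Next, for each $i$, I lower-bound $\max_v\delta_i(v)$: since $X_j\subseteq X_{k-1}$ and $|X^*\setminus X_j|\le k$, the definition of the submodularity ratio gives $\sum_{v\in X^*\setminus X_j}\delta_i(v)\ge\opgamma_{X_{k-1},k}(f_i)\cdot\big(f_i(X_j\cup X^*)-f_i(X_j)\big)\ge\opgamma\cdot\big(f_i(X^*)-f_i(X_j)\big)$ using monotonicity of $f_i$; averaging over the at most $k$ summands then yields $\max_v\delta_i(v)\ge(\opgamma/k)\big(f_i(X^*)-f_i(X_j)\big)$.

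Combining the $\beta$-step and the $\opgamma$-step, and recalling $\beta=\min_{0\le j\le k-1}\beta_{X_j}\le\beta_{X_j}$ (so $\beta\opgamma/k\in[0,1]$), I obtain, for every $i$,
\begin{equation*}
	f_i(X_j\cup\{v^*\}) \;\ge\; (1-\beta\opgamma/k)\,f_i(X_j) + (\beta\opgamma/k)\,f_i(X^*).
\end{equation*}
Since $f_i(X_j)\ge F(X_j)$ and $f_i(X^*)\ge F(X^*)=\opt$ and both coefficients are nonnegative, taking the minimum over $i$ gives $F(X_j\cup\{v^*\})\ge(1-\beta\opgamma/k)F(X_j)+(\beta\opgamma/k)\opt$; and because the greedy picks the best single addition, $F(X_{j+1})\ge F(X_j\cup\{v^*\})$. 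Rearranging, $\opt-F(X_{j+1})\le(1-\beta\opgamma/k)\,(\opt-F(X_j))$; iterating from $F(X_0)=F(\emptyset)=0$ over $k$ steps and using $(1-\beta\opgamma/k)^k\le e^{-\beta\opgamma}$ yields $\opt-F(X_k)\le e^{-\beta\opgamma}\opt$, i.e., $F(X_k)\ge(1-e^{-\beta\opgamma})\opt$.

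The averaging and the final telescoping are routine; the step I expect to require the most care is the clean dovetailing in the per-iteration bound, namely arguing that \emph{one} common item $v^*$ can be simultaneously $\beta$-competitive against the individually-best single item of every $f_i$, and that those individually-best items are, after averaging, $\opgamma$-competitive against $f_i(X^*)$. A minor technicality is the degenerate case $\max_v\delta_i(v)=0$ for some $i$: there $\delta_i(v^*)\ge\beta_{X_j}\max_v\delta_i(v)$ holds trivially since both sides vanish by monotonicity, and $\opgamma>0$ forces $f_i(X^*)\le f_i(X_j)$, so the displayed inequality still holds for that $i$; hence the argument goes through without extra assumptions on the ratios.
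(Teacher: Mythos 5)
Your proof is correct and follows essentially the same route as the paper's: you use the correlation-ratio maximizer $v^*$ as a witness, chain the $\beta$-step with the per-function $\opgamma$-step (your averaging argument over $X^*\setminus X_j$ is exactly the paper's Lemma~\ref{lem:onestep-f_i}, and the combination reproduces Lemma~\ref{lem:onestep-F}), and then telescope the per-iteration contraction $\opt-F(X_{j+1})\le(1-\beta\opgamma/k)(\opt-F(X_j))$. The only cosmetic differences are that you inline the two lemmas into one argument and bound against $f_i(X^*)$ directly rather than against $\opt_i$; your handling of the degenerate cases is also sound.
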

When the number $m$ of functions is 1, i.e., the robust subset selection problem in Definition~\ref{def:robust-subsetsel} degrades to the original subset selection problem in Definition~\ref{def:subset}, we have $\beta=1$, and the approximation ratio becomes $1-e^{-\gamma}$, which is the optimal polynomial-time approximation guarantee~\cite{harshaw2019submodular}.
When $f_1,f_2,\ldots,f_m$ are submodular, we have $\gamma=1$, and the approximation ratio becomes $1-e^{-\beta}$, which is similar to that achieved by the modified greedy algorithm~\cite{houtac21}.  
Though the two approximation guarantees  
cannot be compared exactly due to the slightly different definitions of the correlation ratio, our greedy algorithm is faster than the modified greedy algorithm~\cite{houtac21}.
The greedy algorithm in Algorithm~\ref{alg:Greedy} performs $(n-k/2+1/2)k$ number of worst-case objective function evaluations in total, while the modified greedy algorithm needs to first compute the largest improvement on each objective function in each iteration, thus requires extra $(n\!-\!k/2+1/2)k$ number of  worst-case objective function evaluations in total.
The detailed procedure of the modified greedy algorithm is provided in the appendix due to space limitation. 

The proof of Theorem~\ref{thm:greedy} relies on Lemma~\ref{lem:onestep-F}, which intuitively means that for any subset, the inclusion of one specific item can improve $F$ by at least a quantity proportional to the current distance to the optimum. 
Lemma~\ref{lem:onestep-f_i} shows a similar result with respect to a single objective function $f_i$, which will be used in the proof of Lemma~\ref{lem:onestep-F}.
The detailed proof of  Lemma~\ref{lem:onestep-F} is provided in the appendix due to space limitation. 
\begin{lemma}\cite{qian2016parallel}\label{lem:onestep-f_i}
	For any $1\le i\le m$ and $X \subseteq V$, there exists one item $v \in V \setminus X$ such that
	\begin{align}\label{eq:inclusion-gamma}
		f_i(X \cup \{v\})-f_i(X) \geq (\opgamma_{X,k}(f_i)/k) \cdot (\opt_i-f(X)).
	\end{align}
	where $\opt_i=\max_{X\subseteq V,|X|\le k}f_i(X)$ denotes the optimal function value of the problem in Definition~\ref{def:subset} with respect to the objective function $f_i$. 
\end{lemma}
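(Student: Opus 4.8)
The plan is to invoke the definition of the submodularity ratio (Definition~\ref{def:subratio}) once, applied to a carefully chosen set, and then to pass from a sum of marginal gains to a single gain by an averaging argument. Let $X_i^*\in\arg\max_{Y\subseteq V,\,|Y|\le k}f_i(Y)$ be an optimal subset for $f_i$, so that $f_i(X_i^*)=\opt_i$, and set $S=X_i^*\setminus X$. Then $S\subseteq V\setminus X$, $S\cap X=\emptyset$, and $|S|\le|X_i^*|\le k$, so the pair $L=X$, $S$ is an admissible choice in Definition~\ref{def:subratio} with $b=k$. (We may assume $V\setminus X\neq\emptyset$; otherwise monotonicity already gives $f_i(X)=f_i(V)\ge\opt_i$ and there is nothing to prove.)

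First I would use monotonicity of $f_i$: since $X\cup S=X\cup X_i^*\supseteq X_i^*$, we get $f_i(X\cup S)\ge f_i(X_i^*)=\opt_i$, hence $f_i(X\cup S)-f_i(X)\ge\opt_i-f_i(X)$. Combining this with the defining inequality of $\opgamma_{X,k}(f_i)$ for the pair $L=X$, $S$ gives $\sum_{v\in S}\big(f_i(X\cup\{v\})-f_i(X)\big)\ge\opgamma_{X,k}(f_i)\cdot\big(f_i(X\cup S)-f_i(X)\big)\ge\opgamma_{X,k}(f_i)\cdot\big(\opt_i-f_i(X)\big)$.

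Next, each summand $f_i(X\cup\{v\})-f_i(X)$ is nonnegative by monotonicity and there are $|S|\le k$ of them, so the largest summand is at least $1/|S|\ge 1/k$ times the whole sum; choosing $v\in S\subseteq V\setminus X$ attaining this maximum yields exactly Eq.~\eqref{eq:inclusion-gamma}. The only corner case is $S=\emptyset$, i.e.\ $X_i^*\subseteq X$: then $f_i(X)\ge f_i(X_i^*)=\opt_i$ by monotonicity, the right-hand side of Eq.~\eqref{eq:inclusion-gamma} is nonpositive, and any $v\in V\setminus X$ works since the left-hand side is nonnegative. There is no genuinely hard step here — the whole argument is a single application of the submodularity-ratio bound followed by averaging; the only things to check carefully are that $S$ is a legal argument in Definition~\ref{def:subratio} (which needs $|X_i^*|\le k$, true by the definition of $\opt_i$) and the disposal of the degenerate case $X_i^*\subseteq X$ (and, implicitly, $X\neq V$).
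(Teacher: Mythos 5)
Your proof is correct and is essentially the standard argument for this lemma (which the paper cites from \cite{qian2016parallel} rather than reproving): apply Definition~\ref{def:subratio} with $L=X$ and $S=X_i^*\setminus X$, use monotonicity to lower-bound $f_i(X\cup S)-f_i(X)$ by $\opt_i-f_i(X)$, and average over the at most $k$ elements of $S$. The corner cases you flag ($S=\emptyset$, $X=V$, and implicitly the vacuous case where the right-hand side is nonpositive) are handled correctly, so there is nothing to add.
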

\begin{lemma}\label{lem:onestep-F}
	For any $X \subseteq V$, there exists one item $v \in V \setminus X$ such that
	\begin{equation}\label{eq:inclusion-gamma-F}
		F(X \cup \{v\})-F(X) \geq (\beta_X \opgamma_{X,k}^{\min}/k)\cdot  (\opt-F(X)),
	\end{equation}
	where $\opgamma_{X,k}^{\min}=\min_{1\le i\le m} \opgamma_{X,k}(f_i)$ denotes the minimum submodularity ratio of all the objective functions with respect to $X$ and $k$.
\end{lemma}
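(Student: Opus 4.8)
\emph{Proof plan.} The plan is to produce a single explicit item that works: let $v^*$ be an item attaining the maximum in the definition of the correlation ratio $\beta_X$ (Definition~\ref{def:correlation-ratio}), and show that adding $v^*$ to $X$ improves \emph{every} objective $f_i$ by the required amount at once; the bound on $F=\min_{1\le i\le m}f_i$ then follows by taking a minimum over $i$ only at the very end.

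First I would fix, for each $i$, the item $v^i\in\arg\max_{v\in V\setminus X}f_i(X\cup\{v\})$ from Definition~\ref{def:correlation-ratio}. Lemma~\ref{lem:onestep-f_i} supplies some item of $V\setminus X$ whose marginal gain on $f_i$ is at least $(\opgamma_{X,k}(f_i)/k)(\opt_i-f_i(X))$; since $v^i$ maximizes $f_i(X\cup\{v\})$, and hence also the marginal gain $f_i(X\cup\{v\})-f_i(X)$, over $v\in V\setminus X$, it satisfies
\[
f_i(X\cup\{v^i\})-f_i(X)\ \ge\ (\opgamma_{X,k}(f_i)/k)\cdot(\opt_i-f_i(X)).
\]
Taking $v^*\in\arg\max_{v\in V\setminus X}\min_{1\le i\le m}\big(f_i(X\cup\{v\})-f_i(X)\big)/\big(f_i(X\cup\{v^i\})-f_i(X)\big)$, Definition~\ref{def:correlation-ratio} gives, for every $i$,
\[
f_i(X\cup\{v^*\})-f_i(X)\ \ge\ \beta_X\cdot\big(f_i(X\cup\{v^i\})-f_i(X)\big),
\]
where the degenerate case $f_i(X\cup\{v^i\})=f_i(X)$, which by monotonicity of $f_i$ forces $f_i(X\cup\{v\})=f_i(X)$ for every $v$ and makes both sides $0$, is handled separately. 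Chaining the two displays and using $\opgamma_{X,k}(f_i)\ge\opgamma_{X,k}^{\min}$ together with $\opt_i\ge f_i(X)$ (valid since $|X|\le k$), I obtain, for every $i$,
\[
f_i(X\cup\{v^*\})\ \ge\ (1-c)\,f_i(X)+c\,\opt_i,\qquad c:=\beta_X\opgamma_{X,k}^{\min}/k\in[0,1].
\]

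Finally I would take the minimum over $i$. Using $\opt_i=\max_{Y\subseteq V,|Y|\le k}f_i(Y)\ge\max_{Y\subseteq V,|Y|\le k}\min_{1\le j\le m}f_j(Y)=\opt$, together with $f_i(X)\ge\min_{1\le j\le m}f_j(X)=F(X)$ and $1-c\ge0$, the previous display yields
\[
F(X\cup\{v^*\})=\min_{1\le i\le m}f_i(X\cup\{v^*\})\ \ge\ (1-c)\,F(X)+c\,\opt,
\]
which rearranges to $F(X\cup\{v^*\})-F(X)\ge c\,(\opt-F(X))$, i.e.\ exactly Eq.~\eqref{eq:inclusion-gamma-F} with $v=v^*$. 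I expect the only delicate point to be the bookkeeping in this last step: the per-function estimate must be kept uniform in $i$ and the minimum taken only after the convex-combination rewriting (passing prematurely to the index realizing $F(X)$ or $F(X\cup\{v^*\})$ breaks the chain), and the degenerate-denominator case in Definition~\ref{def:correlation-ratio} must be addressed; both are routine.
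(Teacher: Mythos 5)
Your proposal is correct and follows essentially the same route as the paper's proof: pick the maximizer $\hat v$ in the definition of $\beta_X$, lower-bound each $f_i(X\cup\{\hat v\})-f_i(X)$ via $\beta_X$ times the best single-item gain, invoke Lemma~\ref{lem:onestep-f_i} together with $\opt\le\opt_i$, and only then take the minimum over $i$ after the convex-combination rewriting. The only differences are cosmetic: you make explicit the zero-denominator case and the sign condition $\opt_i\ge f_i(X)$ needed to replace $\opgamma_{X,k}(f_i)$ by $\opgamma_{X,k}^{\min}$, both of which the paper's proof leaves implicit.
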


\begin{algorithm}[t]
	\caption{Greedy Algorithm}
	\label{alg:Greedy}
	\textbf{Input}: all items $V=\{v_1,v_2,\ldots,v_n\}$, the objective function $F=\min_{1\le i\le m}f_i$,
	and a budget $k$\\
	\textbf{Output}: a subset of $V$ with $k$ items\\
	\textbf{Process}:
	\begin{algorithmic}[1]
		\STATE Let $j=0$ and $X_j=\emptyset$;
		\WHILE {$j<k$}
		\STATE  Let $v^*=\arg\max_{v \in V \setminus X_j} F(X_j \cup \{v\})$;
		\STATE  Let $X_{j+1}=X_{j} \cup \{v^*\}$, and $j=j+1$
		\ENDWHILE
		\RETURN {$X_k$}
	\end{algorithmic}
\end{algorithm}

\begin{proof}[Proof of Theorem~\ref{thm:greedy}] 
	We will show that for any $0\le j\le k$,
	\begin{equation}\label{eq:greedy-induction}
		\opt-F(X_j)\le (1-\beta\opgamma/k)^j \cdot \opt.
	\end{equation}
	For $j=0$, Eq.~\eqref{eq:greedy-induction} holds because $X_0 = \emptyset$ and $\forall i: f_i(\emptyset)=0$.
	Suppose that Eq.~\eqref{eq:greedy-induction} holds for some $j$ (where $0 \leq j \leq k-1$), we will prove that it also holds for $j+1$.
	Let $v^*\in \mathop{\arg\max}_{v\in V\backslash X_j}F(X_j\cup\{v\})$. 
	By Lemma~\ref{lem:onestep-F}, we have 
	\begin{align}
		F(X_j\!\cup\!\{v^*\})\!-\!F(X_j) 
		&\!\ge\! (\beta_{X_j}\!\!\opgamma_{X_j,k}^{\min}/k)\!\cdot\! (\opt\!-\!F(X_j))\\
		&\!\ge\! (\beta\opgamma/k)\cdot (\opt-F(X_j)),
	\end{align}
	where the second inequality is by $\beta=\min_{0\le j\le k-1}\beta_{X_j} \leq \beta_{X_j}$, and $\opgamma\!=\!\min_{1\le i\le m} \opgamma_{X_{k-1},k}(f_i) \leq \min_{1\le i\le m} \opgamma_{X_{j},k}(f_i)$ $= \opgamma_{X_j,k}^{\min}{k}$ due to the fact that 
	$\opgamma_{X,b}(f)\le \opgamma_{Y,b}(f)$ for any $Y\subseteq X$.
	According to lines~3-4 in Algorithm~\ref{alg:Greedy}, $X_{j+1}$ is generated by combining $X_j$ with $v^*$. Thus, we have
	\begin{align}
		&\opt-F(X_{j+1})
		= \opt-F(X_j\cup \{v^*\})\\
		&\le (1-\beta\opgamma/k)(\opt\!-\!F(X_j))\le (1-\beta\opgamma/k)^{j+1}\opt,
	\end{align}
	where 
	the last inequality holds by the induction hypothesis.
	
	Thus, we have shown that Eq.~\eqref{eq:greedy-induction} holds for any $0\le j\le k$.
	By setting $j=k$ and applying $1+x\le e^x$ for any $x\in\mathbb{R}$, the theorem holds.
\end{proof}

\section{ The EPORSS Algorithm}

Though the greedy algorithm is efficient and its approximation performance can be theoretically bounded, its performance may be limited in practice due to the greedy nature. Thus, we further propose an Evolutionary Pareto Optimization~\cite{friedrich2015maximizing,qian.nips15}  algorithm for Robust Subset Selection, called EPORSS, which can use more time to find better solutions. 

Let a Boolean vector $\bm{x} \in \{0,1\}^n$ represent a subset $X$ of $V$, where the $i$-th bit $x_i=1$ if $v_i \!\in\! X$ and $x_i=0$ otherwise.  For convenience, we will not distinguish $\bm{x}$ and its corresponding subset. 
EPORSS reformulates the original problem in Definition~\ref{def:robust-subsetsel} as a bi-objective maximization problem:
\vspace{-1.5em}
\begin{gather}
	\arg\max\nolimits_{\bm{x} \in \{0,1\}^n} \;\;  (g_1(\bm{x}),\;g_2(\bm{x})), \\
			\text{where }
	g_1(\bm{x}) = \begin{cases}
		-\infty, & |\bm x|\geq 2k\\
		F(\bm x), &\text{otherwise}
	\end{cases},\quad
	g_2(\bm{x}) = -|\bm x|.
\end{gather}
That is, EPORSS maximizes the original objective $F$ and minimizes the subset size $|\bmx|$ simultaneously. Note that infeasible solutions can cause “shortcuts” and be useful in the evolutionary search, thus $g_1$ only punishes the solutions with size at least $2k$ so that 
the infeasible solutions with size belonging to $\{k+1,\ldots,2k-1\}$ can participate in the optimization process.
Meanwhile,
too many infeasible solutions may lead to longer runtime, thus  the setting of the threshold $2k$ is to make a tradeoff between performance and runtime.

In the bi-objective setting, the domination relationship in Definition~\ref{def-dom} is used to compare two solutions.  Note that  $\bmx$ and $\bmy$ are \emph{incomparable} if neither $\bm{x} \succeq \bmy$ nor $\bmy\succeq \bm{x}$.
\begin{definition}[Domination]\label{def-dom}For two solutions $\bm{x}$ and $\bm{y}$,
	\begin{itemize}
		\item $\bm{x}$ \emph{weakly dominates} $\bm{y}$ (denoted as $\bm{x} \succeq \bm{y}$) if $g_1(\bm{x}) \geq g_1(\bm{y}) \wedge g_2(\bm{x}) \geq g_2(\bm{y})$;
		\item $\bm{x}$ \emph{dominates} $\bm{y}$ (denoted as $\bm{x} \succ \bm{y}$) if ${\bm{x}} \succeq \bm{y}$ and $g_1({\bm{x}}) >g_1(\bm{y}) \vee g_2(\bm{x}) > g_2(\bm{y})$.
	\end{itemize}
\end{definition}
EPORSS as described in Algorithm~\ref{alg:EPORSS} employs a simple multi-objective evolutionary algorithm~\cite{friedrich2015maximizing,qian2019maximizing} to optimize the bi-objective problem. It starts from the empty set $\{0\}^n$ (line~1). In each iteration, a new solution $\bm{x'}$ is generated by applying the bit-wise mutation operator to an archived solution $\bm{x}$ randomly selected from the current population $P$ (lines~3--4); if $\bm{x'}$ is not dominated by any previously archived solution (line~5), it will be added into $P$, and meanwhile those solutions weakly dominated by $\bm{x'}$ will be removed (line~6). After running $T$ iterations, the solution with the largest $F$ value satisfying the size constraint in $P$ is selected (line~10).
\begin{algorithm}[t]
	\caption{EPORSS Algorithm}
	\label{alg:EPORSS}
	\textbf{Input}: all items $V=\{v_1,\ldots,v_n\}$, the objective function $F=\min_{1\le i\le m}f_i$, and a budget $k$\\
	\textbf{Parameter}: the number $T$ of iterations\\
	\textbf{Output}: a subset of $V$ with at most $k$ items\\
	\textbf{Process}:
	\begin{algorithmic}[1]
		\STATE Let $\bm{x}=\{0\}^{n}$, $P=\{\bm{x}\}$, and let $t=0$;
		\WHILE{ $t<T$}
		\STATE Select $\bm{x}$ from $P$ uniformly at random;
		\STATE Generate $\bm{x'}$ by flipping each bit of $\bm{x}$ with prob. $1/n$;
		\IF {$\nexists \bm{z} \in P$ such that $\bm{z} \succ \bm{x'}$}
		\STATE $P = (P \setminus \{\bm{z} \in P \mid \bm{x'} \succeq \bm z\}) \cup \{\bm{x'}\}$
		\ENDIF
		\STATE $t=t+1$
		\ENDWHILE
		\RETURN $\arg\max_{\bm x\in P, |\bm x| \leq k} F(\bm{x})$
	\end{algorithmic}
\end{algorithm}

We can see from the algorithm procedure that EPORSS naturally maintains a population of non-dominated solutions due to the bi-objective transformation. Meanwhile, the bit-wise mutation operator for reproduction is a global search operator, which flips each bit of a solution independently with probability $1/n$. These characteristics may make EPORSS have a better ability of escaping from local optima than the greedy algorithm. Our experiments will show that EPORSS can find better solutions by using more time. Here, we first show that though using an evolutionary process for optimization, EPORSS can also be theoretically grounded. Particularly, Theorem~\ref{thm:eporss} shows that EPORSS can achieve a similar approximation guarantee as the greedy algorithm. Note that $\expect{T}$ denotes the expected number $T$ of iterations, $\opgamma'=\min_{1\le i\le m}\min_{X:|X|=k-1} \opgamma_{X,k}(f_i)$, and $\beta'=\min_{X:|X|\le k-1}\beta_{X}$.
\begin{theorem}\label{thm:eporss}
	For robust subset selection in Definition~\ref{def:robust-subsetsel},  EPORSS with $\expect{T}\le 2ek^2n$ finds a subset $X\subseteq V$ with $|X|\leq k$ and
$F(X) \geq (1-e^{-\beta'\opgamma'})\cdot \opt$.
\end{theorem}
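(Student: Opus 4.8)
The plan is to mimic the inductive argument from the proof of Theorem~\ref{thm:greedy}, but run it along a suitable chain of solutions in the population $P$ rather than along the deterministic greedy sequence. Define, for each $j$ with $0 \le j \le k$, the event/quantity $J_{\max}$ to be the maximum $j \in \{0,1,\ldots,k\}$ such that $P$ contains a solution $\bm{x}$ with $|\bm{x}| \le j$ and $\mathrm{OPT} - F(\bm{x}) \le (1-\beta'\opgamma'/k)^j \cdot \mathrm{OPT}$. I would first observe that $J_{\max} \ge 0$ always holds, since the empty set is either in $P$ or has been dominated by some solution $\bm{x}$ with $|\bm{x}| = 0$ and $F(\bm{x}) \ge F(\emptyset) = 0$ — and because solutions with the same $g_2$-value are kept only if non-dominated, the archived size-$0$ solution satisfies the $j=0$ bound trivially. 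The key claim is that $J_{\max}$ never decreases during a run (this follows from the domination-based acceptance: any solution that would witness a given $j$ can only be replaced by one that weakly dominates it, which still witnesses $j$), and that in each iteration where $J_{\max} = j < k$, there is probability at least $\frac{1}{|P|} \cdot \frac{1}{n}(1-1/n)^{n-1} \ge \frac{1}{2e|P| n}$ of increasing $J_{\max}$.

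For that per-step progress probability, I would fix a witness $\bm{x} \in P$ for the current $J_{\max} = j$, so $|\bm{x}| \le j \le k-1$ and $\mathrm{OPT} - F(\bm{x}) \le (1-\beta'\opgamma'/k)^j \mathrm{OPT}$. By Lemma~\ref{lem:onestep-F} applied to the set $X = \bm{x}$ (legitimate since $|X| \le k-1$, so all relevant submodularity ratios $\opgamma_{X,k}(f_i)$ are bounded below by $\opgamma'$ by monotonicity of $\opgamma_{\cdot,b}$ in the first argument, and $\beta_X \ge \beta'$), there is an item $v \notin X$ with $F(\bm{x} \cup \{v\}) - F(\bm{x}) \ge (\beta'\opgamma'/k)(\mathrm{OPT}-F(\bm{x}))$. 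The mutation that flips exactly the bit for $v$ and no other bit — which has probability $\frac{1}{n}(1-1/n)^{n-1} \ge \frac{1}{en}$ — produces $\bm{x}' = \bm{x} \cup \{v\}$ with $|\bm{x}'| = |\bm{x}| + 1 \le j+1 \le k$ (so $g_1(\bm{x}') = F(\bm{x}')$, not $-\infty$) and $\mathrm{OPT} - F(\bm{x}') \le (1-\beta'\opgamma'/k)(\mathrm{OPT}-F(\bm{x})) \le (1-\beta'\opgamma'/k)^{j+1}\mathrm{OPT}$. Since $\bm{x}'$ has at least this $g_1$-value at this size, either $\bm{x}'$ is added to $P$ or some solution already in $P$ dominates it — in both cases $P$ ends up containing a witness for $j+1$. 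Multiplying by the $1/|P|$ probability of selecting $\bm{x}$ in line~3 gives the bound; here I would use the standard population-size cap $|P| \le 2k$ (the $g_2$-values of non-dominated, non-punished solutions lie in $\{0,-1,\ldots,-(2k-1)\}$, so at most $2k$ solutions coexist), giving per-step success probability at least $\frac{1}{4ek n}$.

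Finally, I would convert these per-step probabilities into the expected-time bound: to go from $J_{\max}=0$ to $J_{\max}=k$ requires $k$ successful increments, and a successful increment happens with probability at least $\frac{1}{4ekn}$ each step, so the expected number of steps to reach $J_{\max}=k$ is at most $k \cdot 4ekn = 4ek^2n$ — wait, the theorem claims $2ek^2n$, so I would instead bound $|P| \le k$ by noting that once $J_{\max}$ is being tracked only solutions of size $\le k$ matter, or more carefully re-examine the cap; in the worst case one recovers the stated $\expect{T} \le 2ek^2n$ by the sharper observation that the relevant witnesses always have size $\le k-1$ before the last step, and the population restricted to sizes $\{0,\ldots,k\}$ has at most $k+1$ members but progress only needs the appropriate index. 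Once $J_{\max} = k$, $P$ contains $\bm{x}$ with $|\bm{x}| \le k$ and $\mathrm{OPT}-F(\bm{x}) \le (1-\beta'\opgamma'/k)^k \mathrm{OPT} \le e^{-\beta'\opgamma'}\mathrm{OPT}$, i.e.\ $F(\bm{x}) \ge (1-e^{-\beta'\opgamma'})\mathrm{OPT}$; and line~10 returns the best feasible solution in $P$, which is at least this good.

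The main obstacle I anticipate is the bookkeeping around the population size $|P|$ and squeezing the constant down to exactly $2ek^2n$: one must argue carefully that the solutions relevant to driving $J_{\max}$ upward all have size at most $k-1$, so that the selection probability in line~3 can be bounded using a population of effective size at most $k$ (not $2k$), yielding $k \cdot 2ekn = 2ek^2n$. A secondary subtlety is handling ties at a given size: when a new solution $\bm{x}'$ is generated with the same $g_2$-value but only weakly better $g_1$-value than an existing witness, one must confirm the acceptance rule still leaves a valid witness in $P$ — this is immediate from Definition~\ref{def-dom} since the new solution is not dominated by the old one unless it is strictly worse, which it is not. Everything else is a direct transcription of the greedy induction, with Lemma~\ref{lem:onestep-F} doing the analytic heavy lifting.
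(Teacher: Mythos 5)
Your proof follows essentially the same route as the paper's: the same potential $J_{\max}$, the same monotonicity argument via weak domination, the same per-step progress via Lemma~\ref{lem:onestep-F}, and the same population-size cap. The structure is sound, and your handling of the acceptance/tie subtleties is correct.

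The one genuine flaw is in the final constant. You bound the mutation probability by $\frac{1}{n}(1-1/n)^{n-1}\ge \frac{1}{2en}$, which introduces a spurious factor of $2$; the standard (and tight enough) bound is $(1-1/n)^{n-1}\ge 1/e$, so the per-step success probability is at least $\frac{1}{enP_{\max}}\ge\frac{1}{2ekn}$ using $P_{\max}\le 2k$, and $k$ increments give $\expect{T}\le 2ek^2n$ directly. Your attempted repairs are not valid: you cannot replace $P_{\max}\le 2k$ by $|P|\le k$ or argue with an ``effective population'' of size $k$, because line~3 selects uniformly from the \emph{entire} population, which legitimately contains up to $2k$ mutually incomparable solutions (one per size in $\{0,\ldots,2k-1\}$, since sizes $\ge 2k$ are punished to $-\infty$); the probability of picking the specific witness is therefore only $\ge 1/(2k)$, not $\ge 1/k$. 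Drop the extra factor of $2$ in the mutation bound and the stated $2ek^2n$ falls out without any further bookkeeping.
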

The proof can be accomplished by following the proof of Theorem 2 in~\cite{qian2019maximizing} and using Lemma~\ref{lem:onestep-F}. The main idea is 
analyzing the increase of a quantity $J_{\max}$, which is defined as the largest number $j\in \{0,1,\ldots,k\}$ such that in the population $P$, $ \exists \bmx$ with  $|\bmx|\le j$ and $F(\bmx)\ge \big(1-(1-\beta'\opgamma'/k)^j\big)\cdot\opt$. 
For the sake of completeness, we still provide the full proof in the appendix.

\section{Analysis of the Correlation Ratio}\label{sec:beta}

In the above two sections, we have shown that the approximation performance of the greedy algorithm and EPORSS can be theoretically bounded. However, the approximation guarantees depend on the submodularity ratio $\opgamma_{X,b}$ in Definition~\ref{def:subratio} and the correlation ratio $\beta_X$ in Definition~\ref{def:correlation-ratio}. Previous studies have shown that the lower bound on $\opgamma_{X,b}$ can be derived for several non-submodular objective functions~\cite{bian2017guarantees,elenberg2016restricted}.  In this section, we further derive a lower bound on $\beta_X$ for the application of robust influence maximization, showing the applicability of the derived approximation guarantees.

Influence maximization is to identify a set of influential users in social networks. Let a directed graph $G(V,E)$ represent a social network, where each node represents a user and each edge $(u,v)$ has a probability $p_{u,v}$ representing the strength of influence from user $u$ to user $v$. 
 As presented in Definition~\ref{def:IM}, the goal is to find a subset $X$ with size at most $k$ such that the expected number (denoted as $\sigma(X)$) of nodes activated by propagating from $X$ is maximized. 
\begin{definition}[Influence Maximization]\label{def:IM}
	Given a directed graph $G(V,E)$, edge probabilities $p_{u,v}$ where $(u,v) \in E$ is an edge, and a budget $k$, to find
	\begin{equation}
		\begin{aligned}
			\mathop{\arg\max}\nolimits_{X \subseteq V}\  \sigma(X) \quad \text{s.t.}\quad |X|\leq k.
		\end{aligned}
	\end{equation}
\end{definition}
Independence Cascade (IC) is a fundamental propagation model~\cite{kempe2003maximizing} to estimate the influence function $\sigma$, and will also be used in our analysis. Starting from a seed set $X$, it uses a set $A_t$ to record the nodes activated at time $t$, and at time $t+1$, each inactive neighbour $v$ of $u \in A_t$ becomes active with probability $p_{u,v}$. 
This process is repeated until no nodes get activated at some time.
Under the IC model, the objective function $\sigma$ is monotone submodular~\cite{kempe2003maximizing}.

For robust influence maximization, we are given several probability vectors $\bmth^1,\bmth^2,\ldots,\bmth^m$, where each $\bmth$ corresponds to a set of edge probabilities, and thus determines a specific influence function, denoted as $\sigma_{\bmth}$.
The goal is to maximize the worst of the $m$ influence functions $\sigma_{\bmth^1},\sigma_{\bmth^2},\ldots,\sigma_{\bmth^m}$, as shown in the following definition.
\begin{definition}[Robust Influence Maximization~\cite{he-kdd16-robust-inf,kalimeris-imcl19-robust-inf}]\label{def:robust-infmax}
	Given a directed graph $G(V,E)$, $m$ probability vectors $\bmth^1,\bmth^2,\ldots,\bmth^m$, and a budget $k$, to find
	\begin{equation}
		\begin{aligned}
			\mathop{\arg\max}\nolimits_{X \subseteq V}\  \min\nolimits_{1\le i\le m}\sigma_{\bmth^i}(X) \quad \text{s.t.}\quad |X|\leq k.
		\end{aligned}
	\end{equation}
\end{definition}
Theorem~\ref{thm:beta-infmax} gives a lower bound on $\beta_{X}$ for this application. The proof relies on Lemma~\ref{lem:beta-infmax}, which intuitively means that when two probability vectors are close, the difference of the influence spread determined by these two vectors can also be small. 
Lemma~\ref{lem:beta-infmax} is inspired from~\cite{chen-kdd16-robust-infmax}, but slightly refined. The detailed proof is provided in the appendix
due to space limitation, and we introduce the main proof idea here. Under the IC model, the influence spread $\sigma_{\bmth}(X)$ can be calculated as 
$\sum_{S\subseteq G}\pi_{\bmth}(S)\cdot \sigma_S(X)$~\cite{kempe2003maximizing},
where $\pi_{\bmth}(S)$ denotes the probability of sampling a subgraph $S$ from $G$ according to the edge probability vector $\bmth$, that is, each edge $(u,v)\in E$ appears in $S$ with probability $p_{u,v}$; and  $\sigma_S(X)$ denotes the number of nodes that can be reachable from $X$ on the given subgraph $S$, which is deterministic when $S$ is given. When two probability vectors only have little difference, the probabilities of sampling a specific subgraph are also slightly different, leading to the similar influence spread.
\begin{lemma}\label{lem:beta-infmax}
	For any two probability vectors $\bmth$ and $\bmth'$,  let $\delta(\bmth,\bmth')=\sum_i |\theta_i-\theta'_i|$ denote the difference between the two vectors, where $\theta_i$ and $\theta'_i$ denote the $i$-th element of $\bmth$ and $\bmth'$, respectively, and $|\cdot |$ denotes the absolute value of a real number. Then, for any $X\subseteq V$, we have $|\sigma_{\bmth}(X)-\sigma_{\bmth'}(X)|\le n\cdot\delta(\bmth,\bmth')$, where $n$ denotes the size of $V$.
\end{lemma}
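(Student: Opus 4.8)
The plan is to build on the subgraph decomposition $\sigma_{\bmth}(X)=\sum_{S\subseteq G}\pi_{\bmth}(S)\cdot\sigma_S(X)$ noted above, together with two elementary properties of the deterministic quantity $\sigma_S(X)$: it is bounded, $0\le\sigma_S(X)\le n$, and it is monotone with respect to the edge set, i.e., $\sigma_S(X)\le\sigma_{S'}(X)$ whenever $S\subseteq S'$, since adding edges can only enlarge the set of nodes reachable from $X$. Everything then reduces to comparing the two sampling distributions $\pi_{\bmth}$ and $\pi_{\bmth'}$.

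First I would run a hybrid argument over the edges. Enumerate the edges of $G$ as $e_1,\ldots,e_L$ with $L=|E|$, and let $\theta_j,\theta'_j$ be the probabilities of $e_j$ under $\bmth$ and $\bmth'$, so that $\delta(\bmth,\bmth')=\sum_{j=1}^L|\theta_j-\theta'_j|$. For $0\le j\le L$, let $\bmth^{(j)}$ be the probability vector that coincides with $\bmth$ on $e_1,\ldots,e_j$ and with $\bmth'$ on $e_{j+1},\ldots,e_L$; thus $\bmth^{(0)}=\bmth'$ and $\bmth^{(L)}=\bmth$. By the triangle inequality,
\[
|\sigma_{\bmth}(X)-\sigma_{\bmth'}(X)|\le\sum_{j=1}^{L}\big|\sigma_{\bmth^{(j)}}(X)-\sigma_{\bmth^{(j-1)}}(X)\big|,
\]
so it suffices to show each step is at most $n\,|\theta_j-\theta'_j|$ and then sum.

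Next I would analyze one step. Since $\bmth^{(j)}$ and $\bmth^{(j-1)}$ agree on every edge except $e_j$, by edge independence in the IC model the probability of sampling a subgraph factors as $\pi(S')$ times a Bernoulli term for $e_j$, where $\pi(\cdot)$ is the common sampling distribution over $E\setminus\{e_j\}$. Hence
\[
\sigma_{\bmth^{(j)}}(X)-\sigma_{\bmth^{(j-1)}}(X)=\sum_{S'\subseteq E\setminus\{e_j\}}\pi(S')\,(\theta_j-\theta'_j)\big(\sigma_{S'\cup\{e_j\}}(X)-\sigma_{S'}(X)\big).
\]
By monotonicity and boundedness, $0\le\sigma_{S'\cup\{e_j\}}(X)-\sigma_{S'}(X)\le n$, and $\sum_{S'}\pi(S')=1$, so the absolute value of the step is at most $n\,|\theta_j-\theta'_j|$. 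Summing over $j$ yields $|\sigma_{\bmth}(X)-\sigma_{\bmth'}(X)|\le n\sum_{j=1}^L|\theta_j-\theta'_j|=n\cdot\delta(\bmth,\bmth')$, as desired.

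The computations are routine; the only point that needs care — the ``hard part'' — is justifying the per-edge factorization cleanly, i.e., that $\pi_{\bmth^{(j)}}$ and $\pi_{\bmth^{(j-1)}}$ have a common marginal over the other $L-1$ edges, which is exactly where edge independence of the IC model enters. An essentially equivalent alternative that sidesteps the hybrid telescoping is to bound $\sum_{S}|\pi_{\bmth}(S)-\pi_{\bmth'}(S)|$ directly: this is (twice) the total variation distance between two products of Bernoulli measures, which is at most the sum of per-coordinate total variation distances $\sum_{j}|\theta_j-\theta'_j|=\delta(\bmth,\bmth')$; combining with $|\sigma_{\bmth}(X)-\sigma_{\bmth'}(X)|\le\sum_S|\pi_{\bmth}(S)-\pi_{\bmth'}(S)|\cdot\sigma_S(X)\le n\sum_S|\pi_{\bmth}(S)-\pi_{\bmth'}(S)|$ gives the same bound.
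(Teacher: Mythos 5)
Your proof is correct and follows essentially the same route as the paper's: a telescoping (hybrid) argument over the edges, changing one coordinate of the probability vector at a time, with each step bounded by $n\,|\theta_j-\theta'_j|$ via conditioning on whether that edge is live or blocked in the sampled subgraph. The only cosmetic difference is that you make the monotonicity of $\sigma_S(X)$ in $S$ explicit (the paper uses it implicitly when dropping the absolute value), and your total-variation aside is a valid equivalent shortcut.
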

Theorem 3 shows that $\beta_X$ can be lower bounded by $1-2en\cdot \delta_{\max}$.
 This is intuitive, because $\delta_{\max}$ measures the difference among the probability vectors, and thus the corresponding objective functions, while $\beta_X$ characterizes the similarity among the objective functions. Thus, a smaller $\delta_{\max}$ will lead to a larger lower bound on $\beta_X$.
 The proof is accomplished by analyzing 
 the ratio of marginal gains on $\sigma_{\bmth^i}$ by respectively adding $v^1$ and $v^i$ to $X$, where $v^1$ and $v^i$ are the items whose inclusion in $X$ can make $\sigma_{\bmth^1}$ and $\sigma_{\bmth^i}$ achieve the largest improvement, respectively.
 The detailed proof is provided in the appendix due to space limitation.
\begin{theorem}\label{thm:beta-infmax}
	For robust influence maximization in Definition~\ref{def:robust-infmax},  let $\delta_{\max}=\max_{1\le i,j\le m,i\neq j}\delta(\bmth^i,\bmth^j)$ denote the maximum difference among the $m$ probability vectors. Then, for any 
	$X\in \{X_0,X_1,\ldots,X_{k-1}\}$, 
	we have
 $\beta_X\ge 1-2en\cdot\delta_{\max}$,
 where $X_j$ ($0\le j\le k-1$) denotes the subset generated in the $j$-th iteration (note that $X_0=\emptyset$) of Algorithm~\ref{alg:Greedy}.
\end{theorem}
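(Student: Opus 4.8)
The plan is to fix an arbitrary $X\in\{X_0,\ldots,X_{k-1}\}$ and directly lower-bound the quantity
$$\beta_X=\max_{v\in V\setminus X}\min_{1\le i\le m}\frac{\sigma_{\bmth^i}(X\cup\{v\})-\sigma_{\bmth^i}(X)}{\sigma_{\bmth^i}(X\cup\{v^i\})-\sigma_{\bmth^i}(X)},$$
where $v^i\in\arg\max_{v\in V\setminus X}\sigma_{\bmth^i}(X\cup\{v\})$. The natural candidate witness for the outer maximum is $v^1$, the item that is optimal for the first influence function; so it suffices to show that for every $i$,
$$\frac{\sigma_{\bmth^i}(X\cup\{v^1\})-\sigma_{\bmth^i}(X)}{\sigma_{\bmth^i}(X\cup\{v^i\})-\sigma_{\bmth^i}(X)}\ge 1-2en\cdot\delta_{\max}.$$
(Here I should note the caveat that the denominators could be zero; I would handle that degenerate case separately, observing that if $\sigma_{\bmth^i}(X\cup\{v^i\})=\sigma_{\bmth^i}(X)$ then adding any item gives no gain for $f_i$, so by the convention the ratio is taken to be $1$, or one restricts the min to indices with positive denominator, consistent with how $\beta_X$ is used in Lemma~\ref{lem:onestep-F}.)

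The key inputs are Lemma~\ref{lem:beta-infmax}, which gives $|\sigma_{\bmth}(Y)-\sigma_{\bmth'}(Y)|\le n\cdot\delta(\bmth,\bmth')$ for every set $Y$, and the submodularity (hence $\gamma=1$) and monotonicity of each $\sigma_{\bmth}$ under the IC model. The chain of estimates I would carry out is: first, relate the marginal gain of $v^1$ on $\sigma_{\bmth^i}$ to its marginal gain on $\sigma_{\bmth^1}$ by applying Lemma~\ref{lem:beta-infmax} twice (to $Y=X$ and $Y=X\cup\{v^1\}$), giving
$$\sigma_{\bmth^i}(X\cup\{v^1\})-\sigma_{\bmth^i}(X)\ \ge\ \big(\sigma_{\bmth^1}(X\cup\{v^1\})-\sigma_{\bmth^1}(X)\big)-2n\delta_{\max}.$$
Second, since $v^1$ is optimal for $\sigma_{\bmth^1}$ and $v^i$ is some item, $\sigma_{\bmth^1}(X\cup\{v^1\})-\sigma_{\bmth^1}(X)\ge \sigma_{\bmth^1}(X\cup\{v^i\})-\sigma_{\bmth^1}(X)$, and then apply Lemma~\ref{lem:beta-infmax} twice again to pass from $\sigma_{\bmth^1}$ back to $\sigma_{\bmth^i}$ on the pair $X, X\cup\{v^i\}$, yielding
$$\sigma_{\bmth^i}(X\cup\{v^1\})-\sigma_{\bmth^i}(X)\ \ge\ \big(\sigma_{\bmth^i}(X\cup\{v^i\})-\sigma_{\bmth^i}(X)\big)-4n\delta_{\max}.$$
Dividing by the (positive) denominator $D_i:=\sigma_{\bmth^i}(X\cup\{v^i\})-\sigma_{\bmth^i}(X)$ gives the ratio is at least $1-4n\delta_{\max}/D_i$.

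The remaining step — and the one I expect to be the real obstacle — is to lower-bound the denominator $D_i$, i.e. to show $D_i$ is not too small relative to $n\delta_{\max}$, so that $4n\delta_{\max}/D_i\le 2en\delta_{\max}$, which needs $D_i\ge 2/e$. Here I would use that $X$ is one of the greedy subsets $X_j$ with $j\le k-1$ and that $v^i$ is the \emph{best} single item for $\sigma_{\bmth^i}$ at $X$: by submodularity of $\sigma_{\bmth^i}$ and monotonicity, the best marginal gain at $X$ is at least $(1/k)\big(\sigma_{\bmth^i}(X^*)-\sigma_{\bmth^i}(X)\big)$ for an optimal size-$k$ set $X^*$, and more simply at least $(1/|X^*\setminus X|)$ times the gain of adding $X^*$; combined with $\opt_i\ge\opt\ge$ some baseline, one extracts a constant lower bound, and the factor $e$ is precisely what makes $4/D_i\le 2e$ work out. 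I would also double-check whether the intended argument instead normalizes differently (e.g. bounding $D_i\ge \sigma_{\bmth^i}(\{v^i\})\ge 1$ trivially since activating one seed activates at least itself, giving $D_i\ge1$ when $v^1\notin X$ forces fresh activation) — the cleanest route is likely $D_i\ge 1$ together with a sharper use of Lemma~\ref{lem:beta-infmax} that only costs $2n\delta_{\max}$ rather than $4n\delta_{\max}$, after which $1-2n\delta_{\max}\ge 1-2en\delta_{\max}$ is immediate; reconciling the constant $2e$ in the statement with the losses incurred is the delicate bookkeeping I would need to get exactly right.
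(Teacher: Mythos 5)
Your skeleton matches the paper's: take $v^1$ as the witness for the outer max, use Lemma~\ref{lem:beta-infmax} to transfer between $\sigma_{\bmth^1}$ and $\sigma_{\bmth^i}$, and then divide by $D_i=\sigma_{\bmth^i}(X\cup\{v^i\})-\sigma_{\bmth^i}(X)$. But the step you yourself flag as the obstacle --- lower-bounding $D_i$ --- is exactly where your proposal fails, and neither of your two candidate fixes works. The fallback $D_i\ge 1$ is false: $\sigma_{\bmth^i}(X\cup\{v^i\})-\sigma_{\bmth^i}(X)$ equals the expected number of \emph{additional} nodes activated, and if every node is already reached from $X$ with probability close to $1$, this marginal gain can be arbitrarily close to $0$. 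Your other route ("$(1/k)(\sigma_{\bmth^i}(X^*)-\sigma_{\bmth^i}(X))$ combined with $\opt_i\ge\opt\ge$ some baseline") is not carried out and does not by itself produce a constant. The paper's actual argument is: by submodularity the best single-item gain is at least the average gain over $v\in V\setminus X$, hence
\begin{equation}
D_i\ \ge\ \frac{\sigma_{\bmth^i}(V)-\sigma_{\bmth^i}(X)}{n-|X|}\ \ge\ \frac{n-\sigma_{\bmth^i}(X)}{n},
\end{equation}
and then one \emph{pessimistically assumes} $\sigma_{\bmth^i}(X)<(1-1/e)n$ (otherwise the greedy subset already attains the optimal $(1-1/e)$-approximation for $\sigma_{\bmth^i}$ and that index can be disregarded), which gives $D_i>1/e$. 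This assumption is the source of the factor $e$ in the bound, and it is not recoverable from the ingredients you list.

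A secondary issue is your constant: your three-step chain applies Lemma~\ref{lem:beta-infmax} to four set--vector pairs and loses $4n\delta_{\max}$, which together with $D_i>1/e$ would only give $1-4en\delta_{\max}$. The paper avoids this by writing the ratio as $1-\bigl(\sigma_{\bmth^i}(X\cup\{v^i\})-\sigma_{\bmth^i}(X\cup\{v^1\})\bigr)/D_i$, so that $\sigma_{\bmth^i}(X)$ cancels in the numerator; only two applications of Lemma~\ref{lem:beta-infmax} are needed (to $X\cup\{v^i\}$ and $X\cup\{v^1\}$), together with the optimality of $v^1$ for $\sigma_{\bmth^1}$, giving a loss of $2n\delta_{\max}$ and hence the stated $1-2en\cdot\delta_{\max}$. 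You anticipated that such a sharpening might exist, but you did not establish it, and without both it and the denominator bound the claimed constant is not reached.
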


\section{Experiments}

In this section, we empirically compare the performance of the greedy algorithm, EPORSS and two previous algorithms, modified greedy~\cite{houtac21} and SATURATE~\cite{krause-jmlr08-rsos},
on the application of robust influence maximization. 
As suggested in~\cite{krause-jmlr08-rsos}, we set the parameter $\alpha$ of SATURATE to 1 such that the output subset will not violate the cardinality constraint. 
The number of iterations of EPORSS is set to $\lfloor 2ek^2n\rfloor$ as suggested by Theorem~\ref{thm:eporss}. As EPORSS is randomized, we repeat its run for 10 times independently, and report the average results and standard deviation.

The experiments are performed on two real-world data sets, \textit{ego-Facebook} and \textit{as-733}, downloaded from {\url{https://snap.stanford.edu/data/index.html}}.
The \textit{ego-Facebook} data set contains one network which consists of friend links from Facebook, and we use the perturbation interval method~\cite{he-kdd16-robust-inf} to sample multiple networks with different edge probabilities (which give rise to different objective functions). That is, we first estimate the propagation probability of one edge from node $u$ to $v$ by $p_{u,v}=\frac{weight(u,v)}{indegree(v)}$, as widely used in~\cite{chen2009efficient,goyal2011simpath}; and then sample from the interval  $[0.9\cdot p_{u,v},1.1\cdot p_{u,v}]$ multiple times to obtain different probabilities. 
For estimating the influence spread, we use the IC model as presented in Section~\ref{sec:beta}, which results in monotone submodular objective functions.

The \textit{as-733} data set contains 733 communication networks spanning an interval of 785 days, where the nodes and edges may be added or deleted over time. 
Thus, the two data sets reflect different causes for uncertainty: the edge probabilities are subject to perturbations, and the influence functions are learned from the networks with different structures. 
We use the general IC model~\cite{kempe2003maximizing} to estimate the influence spread on \textit{as-733}. For the general IC model, the probability of activating $v$ by $u$ is $p_v(u,S)$ instead of $p_{u,v}$, where $S$ is the set of neighbours that have already tried (and failed) to activate $v$. 
Specifically, for each network in \textit{as-733}, we set the probability $p_v(u,S)$ to $\min\{0.1+0.05\cdot |S|,1\}$, i.e., the probability of activating $v$ is 0.1 for the first try, and then the probability increases by 0.05 once a try fails.
Note that under the general IC model, the objective functions are generally non-submodular.

\begin{figure}[t!]\centering
	\hspace{1.3em}
	\begin{minipage}[c]{0.9\linewidth}\centering
		\includegraphics[width=1\linewidth]{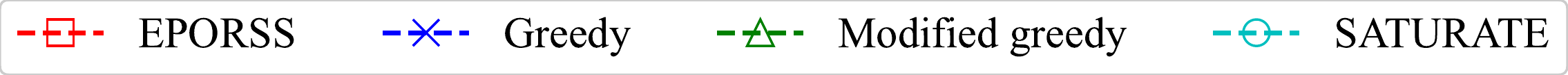}
	\end{minipage}\\\vspace{0.3em}
	\begin{minipage}[c]{0.48\linewidth}\centering
		\includegraphics[width=1\linewidth]{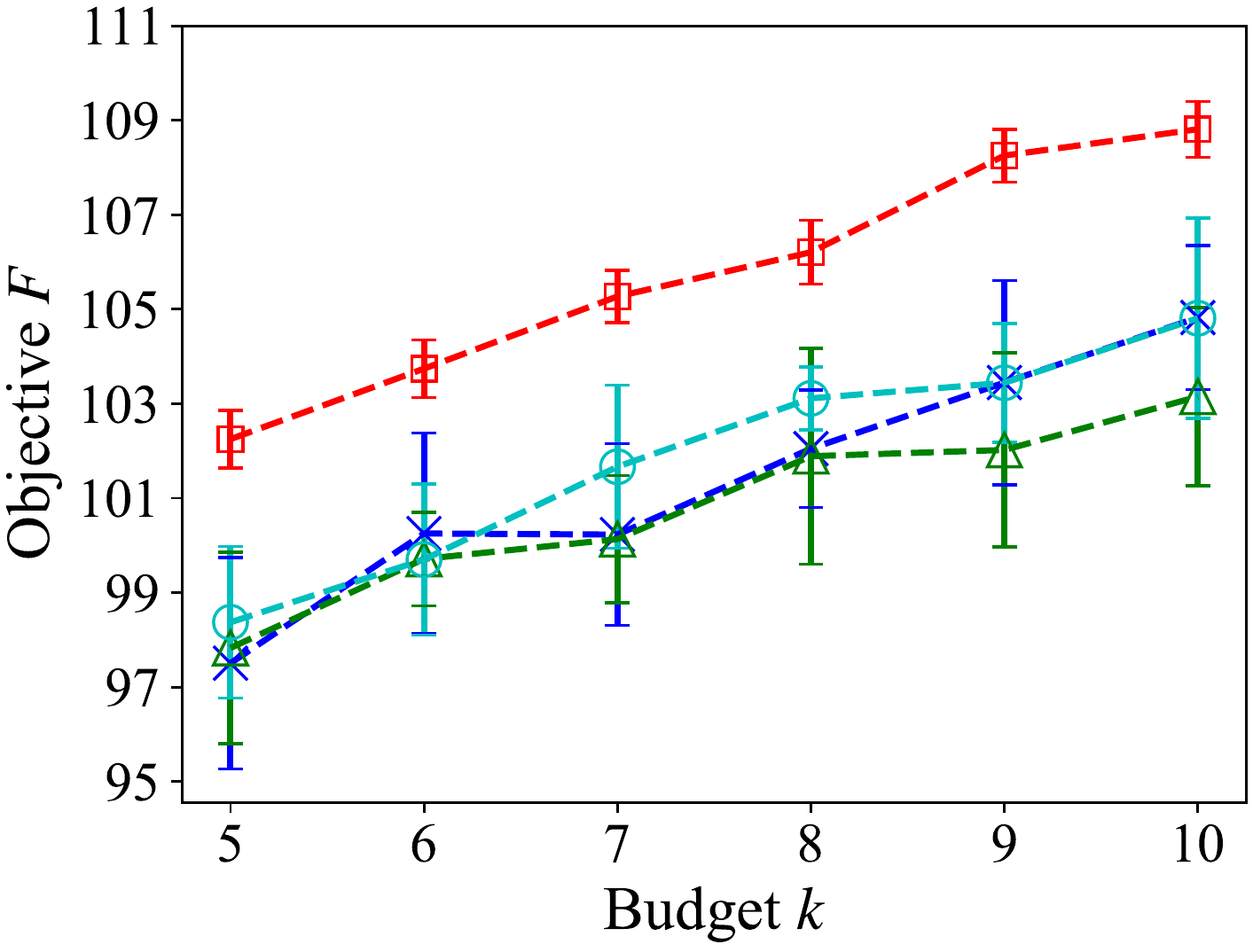}
	\end{minipage}\ \
	\begin{minipage}[c]{0.48\linewidth}\centering
		\includegraphics[width=1\linewidth]{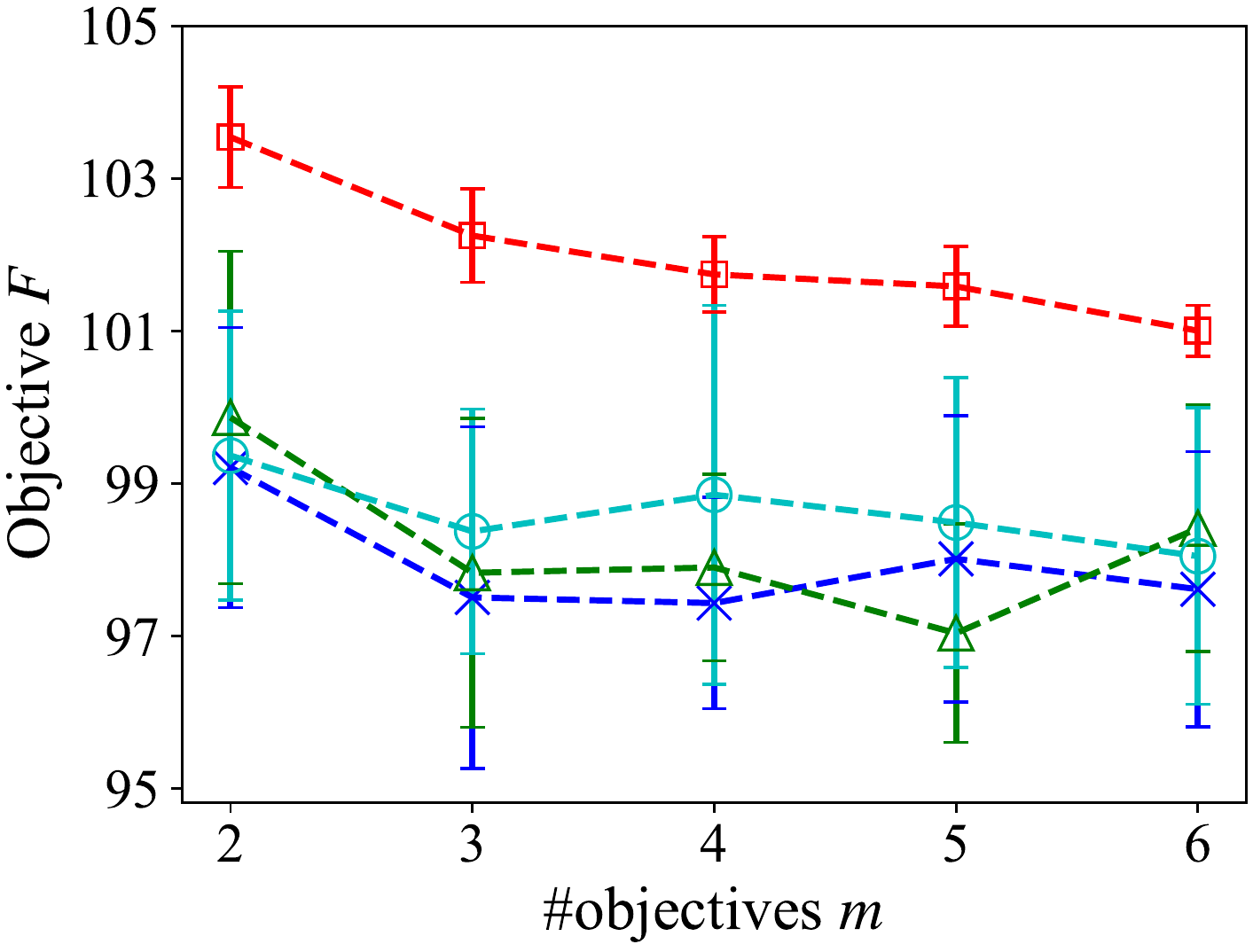}
	\end{minipage}\\\vspace{0.3em}
	\begin{minipage}[c]{1\linewidth}\centering
		\small(a) \textit{ego-Facebook}
	\end{minipage}\\\vspace{0.5em}
	\begin{minipage}[c]{0.48\linewidth}\centering
		\includegraphics[width=1\linewidth]{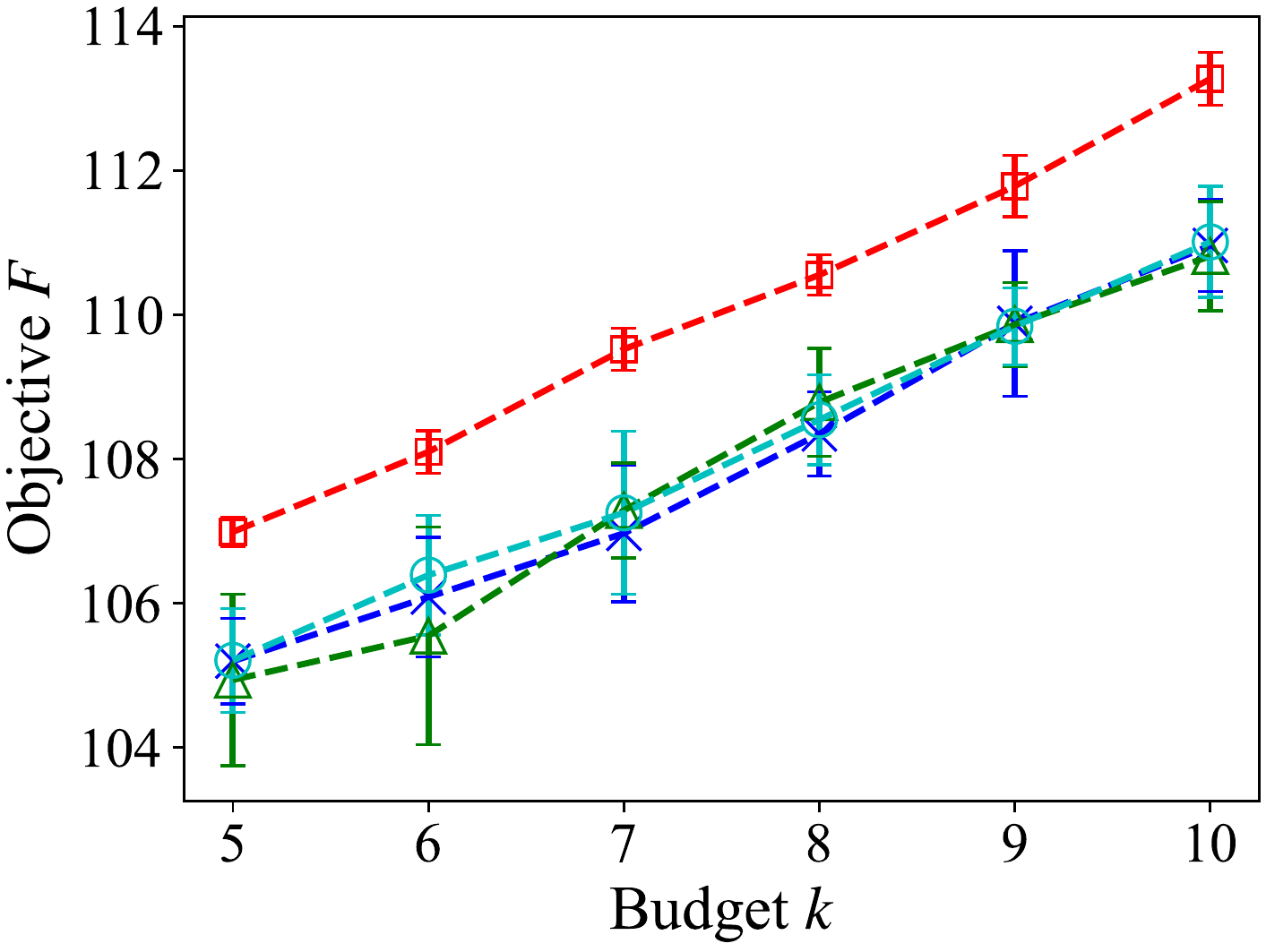}
	\end{minipage}\ \
	\begin{minipage}[c]{0.48\linewidth}\centering
		\includegraphics[width=1\linewidth]{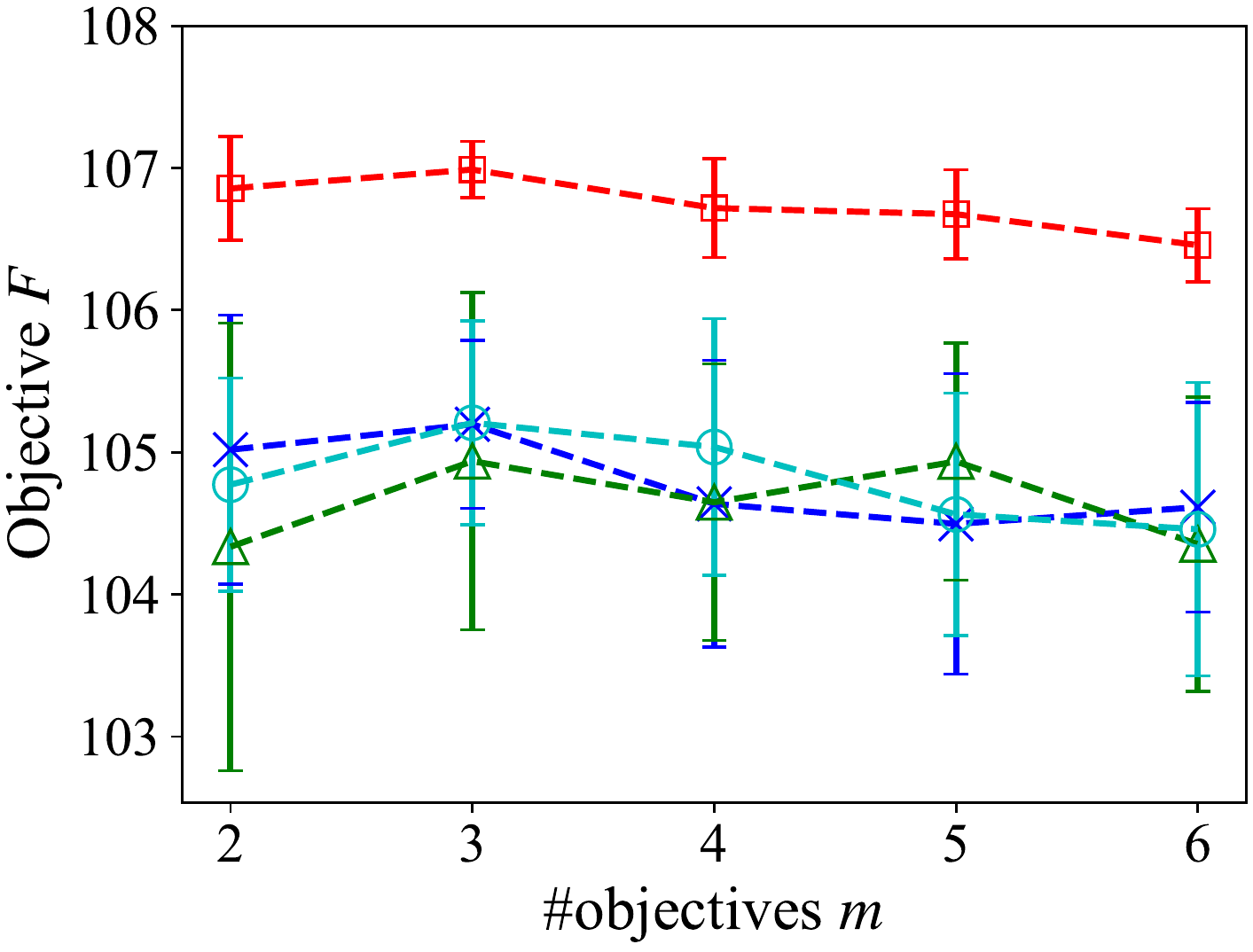}
	\end{minipage}\ \
	\small(b) \textit{as-733}
	\caption{Robust influence maximization on \textit{ego-Facebook} and \textit{as-733}. The objective $F$: the worst-case influence spread (the larger, the better). The left subfigure: objective $F$ vs. budget $k$; the right subfigure: objective $F$ vs. \#objectives  $m$.}\label{fig:std}
\end{figure}

For both data sets, we examine the performance of the compared algorithms when 
the budget $k$ changes from 5 to 10 (where the number $m$ of objective functions is fixed to 3), 
and also consider the scenario where the number $m$ of objective functions changes from 2 to 6 (where the budget $k$ is fixed to 5).
Note that we sample the most active 200 nodes from these two data sets for examination, as in~\cite{he-kdd16-robust-inf}. 
To estimate the influence spread $\sigma(\!X\!)$ of a subset $X$ of nodes,  we simulate the diffusion process 100 times independently and use the average as an estimation.
As the objective evaluation is noisy, we also repeat the running of the three deterministic algorithms (i.e., the greedy algorithm, the modified greedy algorithm and SATURATE)
10 times independently and report the average results.
We can observe from Figure~\ref{fig:std} that 
the three deterministic algorithms  
achieve the similar performance, while EPORSS is much better.

\textbf{Running time comparison.} 
The greedy algorithm in algorithm~\ref{alg:Greedy} needs to perform $(n-j+1)$ number of worst-case objective function evaluations in the $j$-th iteration.
Thus, the total running time, i.e., the number of  worst-case objective function evaluations, is $\sum_{j=1}^{k}(n-j+1)=(n-k/2+1/2)k$. 
The modified greedy algorithm starts from an empty set, and iteratively adds one item $v$ into the current subset $X$, such that all the objective functions can be improved as much as possible.
However, for each of the $m$ objective functions, the improvement  is measured by the ratio of the marginal gain achieved by adding $v$ and the largest possible marginal gain, which needs to be computed in the first place and thus requires extra $(n-j+1)$ number of objective function evaluations
in the $j$-th iteration. The process terminates after $k$ items are selected. Thus, the total running time is $\sum_{j=1}^{k}2(n-j+1)=(2n-k+1)k$, i.e., double that of the greedy algorithm. Though the extra objective function evaluations can be avoided by storing the marginal gain achieved by adding each item, extra storage cost is required. 
SATURATE employs a binary search framework, and each shrink over the search interval needs to perform a greedy subroutine, implying that the running time of SATURATE is substantially larger than that of the greedy algorithm. Thus, the greedy algorithm can achieve competitive performance to the modified greedy algorithm and SATURATE using less time.

EPORSS can achieve the best performance using more time, i.e., $2ek^2n$ iterations. However, $2ek^2n$ is only a theoretical upper bound, which may be too pessimistic in practice. By selecting the three deterministic algorithms (i.e., the greedy algorithm, the modified greedy algorithm and SATURATE) as the baselines, we plot the curve of the objective $F$ over the running time of EPORSS on  the two data sets, as shown in Figure~\ref{fig:runtime}. The $x$-axis is in $kn$, the running time order of the greedy algorithm. It can be observed that EPORSS takes only about $0.9kn$ and $0.6kn$ iterations to achieve the best performance on the two data sets, respectively, implying that EPORSS can be more efficient in practice.

\begin{figure}[t!]\centering
	\begin{minipage}[c]{0.48\linewidth}\centering
		\includegraphics[width=1\linewidth]{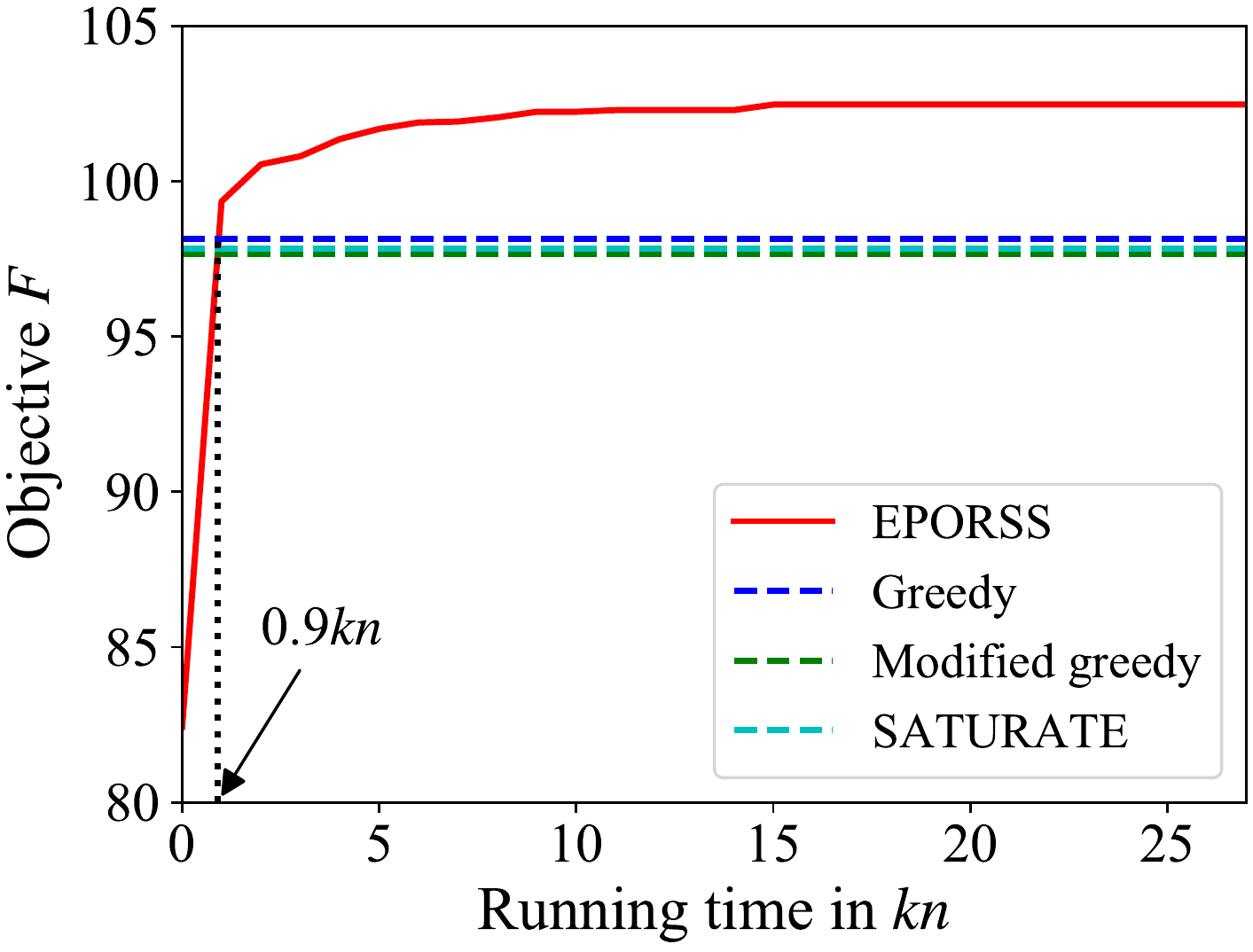}
	\end{minipage}\hspace{0.5em}
	\begin{minipage}[c]{0.48\linewidth}\centering
		\includegraphics[width=1\linewidth]{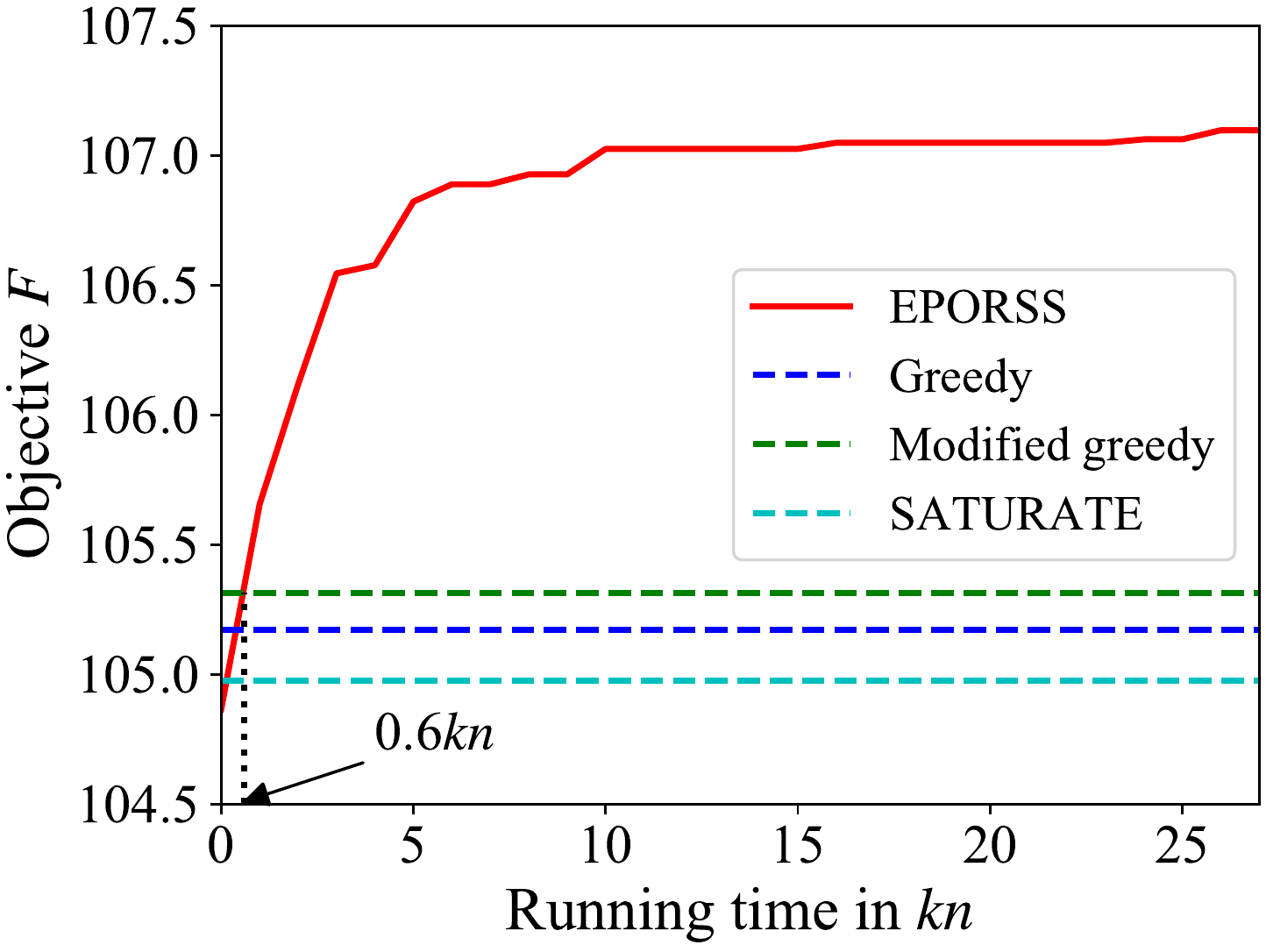}
	\end{minipage}\\\vspace{0.3em}
	\begin{minipage}[c]{0.45\linewidth}\centering
		\small(a) \textit{ego-Facebook}
	\end{minipage}\hspace{1em}
	\begin{minipage}[c]{0.45\linewidth}\centering
		\small(b) \textit{as-733}
	\end{minipage}
	\caption{The objective $F$ vs. the running time on robust influence maximization with $m=3$ and $k=5$. The objective $F$: the worst-case influence spread (the larger, the better). }\label{fig:runtime}
\end{figure}

\section{Conclusion}
In this paper, we study the robust subset selection problem with monotone
objective functions, and give two algorithms, i.e., the greedy algorithm and the evolutionary Pareto optimization algorithm EPORSS, with bounded approximation guarantees. The experimental results on the application of robust influence maximization show that the greedy algorithm can achieve competitive performance to the previous algorithms, but is more efficient; while EPORSS can achieve better performance by using more time. Thus, these two algorithms can help solve various real-world robust subset selection problems better. 
Note that we have considered the worst-case running time complexity of EPORSS in theoretical analysis, and the empirical results show that EPORSS can be efficient in practice. Thus, it would be interesting to analyze the average-case running time complexity, which may reflect the practical performance of EPORSS more accurately.


\small
\bibliographystyle{named}
\bibliography{eporss-bib}

\appendix

\newpage
   
\newpage
\normalsize
\section*{Appendix}
\vspace{1em}
\subsection*{Procedure of the Modified Greedy Algorithm}
\begin{algorithm}[h]
	\caption{Modified Greedy Algorithm}
	\textbf{Input}: all items $V=\{v_1,v_2,\ldots,v_n\}$, 
	$m$ objective functions $f_1,f_2,\ldots,f_m$, 
	and a budget $k$\\
	\textbf{Output}: a subset of $V$ with $k$ items\\
	\textbf{Process}:
	\begin{algorithmic}[1]
		\STATE Let $j=0$ and $X_j=\emptyset$;
		\WHILE {$j<k$}
		\FOR {$i=1$ to $m$}
		\STATE Let $a_{i}^*=\arg\max_{a\in V \setminus X_j}\big(f_i(X_j\cup \{a\}-f_i(X_j)\big)$
		\ENDFOR
		\STATE  Let $v^*=\arg\max_{v \in V \setminus X_j} \min_i \frac{f_i(X_j \cup \{v\})-f_i(X_j)}{f_i(X_j \cup \{a_i^*\})-f_i(X_j)}$;
		\STATE  Let $X_{j+1}=X_{j} \cup \{v^*\}$, and $j=j+1$
		\ENDWHILE
		\RETURN {$X_k$}
	\end{algorithmic}
\end{algorithm}

\subsection*{Omitted Proofs}
\vspace{0.5em}
\begin{proof}[Proof of Lemma~\ref{lem:onestep-F}]
	Let 
	\begin{equation}
		\hat{v}\in \mathop{\arg\max}\nolimits_{v\in V\setminus X}\min\nolimits_{1\le i\le m}\frac{f_i(X\cup \{v\})-f_i(X)}{f_i(X\cup \{v^i\})-f_i(X)},
	\end{equation}
	where $v^i\in \arg\max_{v\in V\backslash X} f_i(X\cup \{v\})$. 
	Then, $\forall 1\le i\le m$,
	\begin{equation}
		\begin{aligned}
			&f_i(X\cup\{\hat{v}\})\!-\! f_i(X)\\
			&= \frac{f_i(X\cup \{\hat{v}\})-f_i(X)}{f_i(X\cup \{v^i\})-f_i(X)}\cdot \big( f_i(X\cup \{v^i\})-f_i(X)\big)\\
			&\ge \beta_{X} \cdot \big( f_i(X\cup \{v^i\})-f_i(X)\big)\\
			&\ge \beta_X\cdot (\opgamma_{X,k}^{\min}/k)\cdot (\opt_i-f_i(X))\\
			&\ge  (\beta_X\opgamma_{X,k}^{\min}/k)\cdot (\opt-f_i(X)),
		\end{aligned}
	\end{equation}
	where the first inequality is by the definitions of $\hat{v}$ and $\beta_X$, the second inequality is by Lemma~\ref{lem:onestep-f_i} and the definition of $\opgamma_{X,k}^{\min}$, and the last inequality holds because  $\opt=F(X^*)=\min_{1\le j\le m}f_j(X^*)\le  f_i(X^*)\le \opt_i$, where $X^*\in \mathop{\arg\max}_{X\subseteq V,|X|\le k}F(X)$.
	Thus, 
	\begin{align}
		&f_i(X\cup \{\hat{v}\})-(\beta_X\opgamma_{X,k}^{\min}/k)\cdot \opt \\
		&\ge  (1-\beta_X\opgamma_{X,k}^{\min}/k)f_i(X)
		\ge (1-\beta_X\opgamma_{X,k}^{\min}/k)F(X),
	\end{align}
	which holds for any $1 \leq i \leq m$, implying
	\begin{equation}
		F(X\cup \{\hat{v}\})-F(X) \ge (\beta_X\opgamma_{X,k}^{\min}/k)\cdot (\opt-F(X)).
	\end{equation}
	Thus, the lemma holds.
\end{proof}

\vspace{0.5em}
\begin{proof}[Proof of Theorem~\ref{thm:eporss}]
	The theorem is proved by analyzing the increase of a quantity $J_{\max}$, which is defined as \begin{align}
		J_{\max}=\max \big\{& j\in \{0,1,\ldots,k\}\mid \exists \bmx\in P \ \text{s.t.}\  |\bmx|\le j \wedge \\
		&F(\bmx)\ge \big(1-(1-\beta'\opgamma'/k)^j\big)\cdot\opt\big\}.
	\end{align}
	We only need to analyze the expected number of iterations until $J_{\max}$  increases to $k$,  implying $\exists \bmx\in P$ such that $|\bmx|\le k$ and $F(\bmx)\ge (1-(1-\frac{\beta'\opgamma'}{k})^k)\cdot \opt \ge (1-e^{-\beta'\opgamma'})\cdot \opt $. 
	
	The initial value of $J_{\max}$ is $0$, because EPORSS starts from $\{0\}^n$. 
	Assume that currently $J_{\max}=i<k$, and let $\bm{x}$ be the corresponding solution with the value $i$, i.e., $|\bmx| \leq i$ and
	\begin{equation}\label{eq:inductive}
		F(\bmx)\ge \big(1-(1-\beta'\opgamma'/k)^i\big) \cdot\opt.
	\end{equation}
	We first show that $J_{\max}$ cannot decrease. If $\bm{x}$ is kept in $P$, $J_{\max}$ obviously will not decrease. If $\bm{x}$ is deleted from $P$ in line~6 of Algorithm~\ref{alg:EPORSS}, the newly included solution $\bm{x}'$ must weakly dominate $\bm{x}$, i.e., $F(\bm{x}') \geq F(\bm{x})$ and $|\bmx'| \leq |\bmx|$, implying $J_{\max} \geq i$.
	
	We next show that $J_{\max}$ can increase by at least 1 in each iteration with probability at least $1/(enP_{\max})$, where $P_{\max}$ denotes the largest size of the population $P$ during the optimization procedure of EPORSS. By Lemma~\ref{lem:onestep-F}, flipping one specific 0-bit of $\bm{x}$ (i.e., adding a specific item into $\bm{x}$) can generate a new solution $\bm{x}'$ such that 
	\begin{equation}\label{eq:J-increase}
		\begin{aligned}
			F(\bm{x}')-F(\bm{x}) 
			&\ge (\beta_{\bmx} \opgamma^{\min}_{\bmx,k}/k)\cdot (\opt-F(\bmx))\\
			&\ge (\beta' \opgamma'/k)\cdot (\opt-F(\bmx)),
		\end{aligned}
	\end{equation}
	where the last inequality holds because $\beta'\!\!=\!\min_{\!X:|X|\le k-1}\!\beta_{X}$ $ \leq \beta_{\bmx}$, and $\opgamma'=\min_{1\le i\le m}\min_{X:|X|=k-1} \opgamma_{X,k}(f_i)  \leq \min_{1\le i\le m}\gamma_{\bmx,k}(f_i)=\gamma^{\min}_{\bmx,k}$  due to the fact that $\gamma_{X,k}(f)$ is  monotone non-increasing with respect to $X$. Note that $|\bm{x}|\leq i< k$.
	Combining Eqs.~\eqref{eq:inductive} and~\eqref{eq:J-increase} leads to
	\begin{equation}
		\begin{aligned}
			F\left(\bmx'\right) 
			&\ge (1-\beta'\gamma'/k) \cdot F(\bmx)+(\beta'\gamma'/k) \cdot \opt\\
			& =\big(1-(1-\beta'\gamma'/k)^{i+1}\big) \opt.
		\end{aligned}
	\end{equation}
	Note that $|\bmx'|=|\bmx|+1\le i+1$. Thus, $\bmx'$ will be included into $P$; otherwise, $\bm{x}'$ must be dominated by one solution in $P$ (line~5 of Algorithm~\ref{alg:EPORSS}), implying that $J_{\max}$ has already been larger than $i$, which contradicts with the assumption $J_{\max}=i$. After including $\bm{x}'$, $J_{\max} \geq i+1$. Thus, $J_{\max}$ can increase by at least 1 in one iteration with probability at least $(1/P_{\max})\cdot (1/n)(1-1/n)^{n-1} \geq 1/(enP_{\max})$, where $1/P_{\max}$ is a lower bound on the probability of selecting $\bm{x}$ in line~3 of Algorithm~\ref{alg:EPORSS} due to uniform selection, and $1/n(1-1/n)^{n-1}$ is the probability of flipping a specific bit of $\bm{x}$ while keeping the other bits unchanged in line~4. This implies that the expected number of iterations to increase $J_{\max}$ by at least 1 is at most $enP_{\max}$.
	
	During the optimization procedure of EPORSS, the solutions maintained in the population $P$ must be incomparable. Thus, for any $i\in \{0,1,\ldots,2k-1\}$, $P$ contains at most one solution with size $i$. Note that the solutions with size larger than $2k-1$ have value $-\infty$ on the first objective, and must be excluded from $P$. Then, we have $P_{\max}\le 2k$.
	Since $J_{\max}$ needs to be increased by at most $k$ times to reach $k$, the expected number  $\expect{T}$ of iterations for finding the desired approximation guarantee is at most $2ek^2n$.
\end{proof}

\vspace{0.5em}
\begin{proof}[Proof of Lemma~\ref{lem:beta-infmax}]
	As in the proof of Theorem 2.2 in~\cite{kempe2003maximizing}, the influence function $\sigma(X)$ can be calculated as
	\begin{equation}\label{eq:subgraph}
		\sigma_{\bmth}(X)=\sum_{S\subseteq G}\pi_{\bmth}(S)\cdot \sigma_S(X),
	\end{equation}
	where $\pi_{\bmth}(S)$ denotes the probability of sampling a subgraph $S$ from $G$ according to the edge probability vector $\bmth$, that is, each edge $(u,v)\in E$ appears in $S$ with probability $p_{u,v}$; and  $\sigma_S(X)$ denotes the number of nodes that can be reachable from $X$ on the given subgraph $S$. Note that $\sigma_S(X)$ is deterministic when $S$ is given. 
	
	Now we consider a probability vector $\bmth^{(1)}$, such that $\theta^{(1)}_1=\theta'_1$ and $\forall i>1:\theta_i^{(1)}=\theta_i$, i.e., $\bmth^{(1)}$ is the same as $\bmth$ except for the first element. Denote the edge corresponding to $\theta^{(1)}_1$ as $e_1$. By Eq.~\eqref{eq:subgraph}, we have
		\begin{align}
			&\sigma_{\bmth^{(1)}}(X)\\
			&=\!\sum_{S\subseteq G,e_1\in S}\pi_{\bmth^{(1)}}(S) \!\cdot\! \sigma_S(X)\!+\!\sum_{S\subseteq G,e_1\notin S}\pi_{\bmth^{(1)}}(S)\!\cdot\! \sigma_S(X)\\
			&=\sum_{S\subseteq G,e_1\in S}\pi_{\bmth^{(1)}}(S\mid e_1\in S)\prob(e_1\in S)\cdot \sigma_S(X)\\
			&\quad+\sum_{S\subseteq G,e_1\notin S}\pi_{\bmth^{(1)}}(S|e_1\notin S)\prob(e_1\notin S)\cdot \sigma_S(X)\\
			&=\ \theta^{(1)}_1\sum_{S\subseteq G,e_1\in S}\pi_{\bmth^{(1)}}(S|e_1\in S)\cdot \sigma_S(X)\\
			&\quad+(1-\theta^{(1)}_1)\sum_{S\subseteq G,e_1\notin S}\pi_{\bmth^{(1)}}(S|e_1\notin S)\cdot \sigma_S(X)\\
			&=\ \theta^{(1)}_1\sigma(X|e_1 \text{ is live})+(1-\theta^{(1)}_1)\sigma(X|e_1 \text{ is blocked}),
		\end{align}
	where the third equality holds because each edge of $S$ is chosen independently, and ``$e_1$ is live" (or ``$e_1$ is blocked") means that it is always successful (or unsuccessful) when a node tries to activate its neighbouring node through the edge $e_1$. Then, we have
	\begin{equation}
		\begin{aligned}\label{eq:sigma-diff}
			&\sigma_{\bmth^{(1)}}(X)-\sigma_{\bmth}(X)\\
			&=(\theta^{(1)}_1-\theta_{1})\sigma(X|e_1 \text{ is live})\\
			&\quad +(\theta_{1}-\theta^{(1)}_1)\sigma(X|e_1 \text{ is blocked})\\
			&=(\theta'_1-\theta_{1})\cdot (\sigma(X|e_1 \text{ is live})-\sigma(X|e_1 \text{ is blocked}))\\
			&\le |\theta'_1-\theta_1|\cdot (\sigma(X|e_i \text{ is live})-\sigma(X|e_i \text{ is blocked})) \\
			&\le |\theta'_1-\theta_1|\cdot n,
		\end{aligned}
	\end{equation}
   where the last inequality holds  because there are at most $n$ nodes in the graph.
	Next, we consider the sequence of probability vectors $\bmth^{(1)}, \bmth^{(2)},\ldots$,  where $\bmth^{(i)}$ is the same as $\bmth^{(i-1)}$ except for the $i$-th element (which is set to $\theta'_i$).  That is, $\theta^{(i)}_i=\theta'_i$ and $\forall j\neq i:\theta_j^{(i)}=\theta_j^{(i-1)}$.
	By Eq.~\eqref{eq:sigma-diff}, we have 
	\begin{align}
		&\sigma_{\bmth'}(X)-\sigma_{\bmth}(X)\\
		&=\sigma_{\bmth^{(1)}}(X)-\sigma_{\bmth}(X)+\sigma_{\bmth^{(2)}}(X)-\sigma_{\bmth^{(1)}}(X)+\ldots\\
		&\le \sum_{i}|\theta'_i-\theta_i|\cdot n
		=n\cdot \delta(\bmth,\bmth').
	\end{align}
	As the above equation also holds for $\sigma_{\bmth}(X)-\sigma_{\bmth'}(X)$, we conclude the lemma.
\end{proof}

\vspace{0.3em}
\begin{proof}[Proof of Theorem~\ref{thm:beta-infmax}]
	In the following proof, we use $\sigma_i$ to represent $\sigma_{\bmth^i}$  for short.
	Let $v^i\in \arg\max_{v\in V\backslash X} \sigma_i(X\cup \{v\})$. For $1\le i\le m$, we have 
	\begin{align}
		&\frac{\sigma_i(X\cup \{v^1\})-\sigma_i(X)}{\sigma_i(X\cup \{v^i\})-\sigma_i(X)}\\
		&=\frac{\sigma_i(X\cup \{v^1\})\!-\!\sigma_i(X\cup \{v^i\})\!+\!\sigma_i(X\cup \{v^i\})\!-\!\sigma_i(X)}{\sigma_i(X\cup \{v^i\})-\sigma_i(X)}\\
		&=1-\frac{\sigma_i(X\cup \{v^i\})-\sigma_i(X\cup \{v^1\})}{\sigma_i(X\cup \{v^i\})-\sigma_i(X)}\\
		&\ge 1-\frac{\sigma_1(X\!\cup\! \{v^i\})\!+\!n\!\cdot\!\delta_{\max}\!-\!(\sigma_1(X\!\cup\! \{v^1\})\!-\!n\!\cdot\!\delta_{\max})}{\sigma_i(X\!\cup \!\{v^i\})-\sigma_i(X)}\\
		&\ge 1-\frac{2n\cdot\delta_{\max}}{\sigma_i(X\cup \{v^i\})-\sigma_i(X)},
	\end{align}
	where the first inequality is by Lemma~\ref{lem:beta-infmax} and the definition of $\delta_{\max}$. To derive a lower bound on $\sigma_i(X\cup \{v^i\})-\sigma_i(X)$, we have
	\begin{equation}
		\begin{aligned}
			&\sigma_i(X\cup \{v^i\})-\sigma_i(X)\\
			&\ge \frac{1}{n-|X|}\cdot \sum_{v\in V\backslash X}(\sigma_i(X\cup\{v\})-\sigma_i(X))\\
			&\ge \frac{1}{n-|X|}\cdot (\sigma_i(V)-\sigma_i(X))\ge \frac{n-\sigma_i(X)}{n},
		\end{aligned}
	\end{equation}
	where the first inequality is by the definition of $v^i$, and the second inequality is by the submodularity of $\sigma_i$. In the following discussion, we can pessimistically assume that  $\sigma_i(X)< (1-1/e)n$; because otherwise, 
	the greedy algorithm has already generated a subset $X$ that performs well on $\sigma_i$, i.e., achieves the optimal $(1-1/e)$ approximation ratio for $\sigma_i$, implying that there is no need to consider $\sigma_i$.
	Then, we get 
	\begin{equation}
		\begin{aligned}
			\frac{\sigma_i(X\cup \{v^1\})-\sigma_i(X)}{\sigma_i(X\cup \{v^i\})-\sigma_i(X)}
			\ge 1-2en\cdot\delta_{\max}.
		\end{aligned}
	\end{equation}
	Note that the above equation holds for any $1\le i\le m$, implying 
	$\beta_X\ge 1-2en\cdot\delta_{\max}$. Thus, the theorem holds.
\end{proof}

\end{document}